\definecolor{darkgreen}{rgb}{0.0, 0.6, 0.0}
\definecolor{darkred}{rgb}{0.85, 0.0, 0.0}
\colorlet{ZeroColor}{red!70!black}
\colorlet{OneColor}{green!50!black}
\colorlet{TwoColor}{blue!70!black}
\definecolor{ThreeColor}{RGB}{255, 165, 0}
\newcommand{\defeq}{\vcentcolon=}
\newtheorem{definition}{Definition}[section]
\newtheorem{lemma}[definition]{Lemma}
\newtheorem{theorem}[definition]{Theorem}
\newtheorem{corollary}[definition]{Corollary}
\newcommand\Set[2]{\{\,#1\mid#2\,\}}
\newcommand\Robot{\mathcal{R}}
\newcommand\Sense[2]{\mathbb{S}(#1, #2)}
\newcommand\Projection[3]{\mathbb{P}_{#1}(#2, #3)}
\newcommand\Dcube{\mathcal{Q}}
\newcommand\SenseGraph[2]{G(#1, #2)}
\newcommand\Position{p}
\newcommand{\nosemic}{\renewcommand{\@endalgocfline}{\relax}}% Drop semi-colon ;
\newcommand{\dosemic}{\renewcommand{\@endalgocfline}{\algocf@endline}}% Reinstate semi-colon ;
\newcommand*{\argmin}{\operatornamewithlimits{argmin}\limits}
\pgfmathsetmacro{\upperrowmult}{1/3}
\pgfmathsetmacro{\XSize}{20}
\pgfmathsetmacro{\YSize}{20}
\pgfmathsetmacro{\ZSize}{20}
\pgfmathsetmacro{\isometricangle}{40}
\pgfmathsetmacro{\tanofangle}{tan(\isometricangle)}
\pgfmathsetmacro{\opacitylevel}{0.3}
\pgfmathsetmacro{\layerstep}{2}
\pgfmathsetmacro{\interlayerstep}{6}
\newcommand{\DrawGridFace}[2]{
    \path[box_frame] (0, 0) rectangle (#1, #2);
    \path[dashed_grid] (0, 0) grid (#1, #2);
}
\newcommand{\DrawGridBox}[5]{
    % draw box's frame
    \begin{scope}[shift={(#1, #2)}]
        \DrawGridFace{#3}{#5}
        \begin{scope}[shift={(0, #5)}]
            \begin{scope}[cm={1,0,tan(90-\isometricangle),1,(0,0)}]
                \begin{scope}[xscale=1, yscale=\upperrowmult]
                    \DrawGridFace{#3}{#4}
                \end{scope}
            \end{scope}
        \end{scope}
        \begin{scope}[shift={(#3, 0)}]
            \begin{scope}[cm={1,tan(\isometricangle),0,1,(0,0)}]
                \pgfmathsetmacro{\xstep}{\upperrowmult / \tanofangle}
                \begin{scope}[xscale=\xstep]
                    \DrawGridFace{#4}{#5}
                \end{scope}
            \end{scope}
        \end{scope}
    \end{scope}
}
\newcommand{\DrawCubeCorner}[2]{
    \begin{scope}[shift={(#1, #2)}]
        \foreach \yrow in {1, ..., \VisibilityRange} {
            \pgfmathsetmacro{\yxcoord}{\upperrowmult * tan(90 - \isometricangle) * (\yrow - 1)}
            \pgfmathsetmacro{\yzcoord}{\upperrowmult * (\yrow - 1)}
            \foreach \zrow in {1, ..., \VisibilityRange} {
                \foreach \xrow in {1, ..., \VisibilityRange} {
                    \pgfmathsetmacro{\xcoord}{\layerstep * (\xrow - 1) - \layerstep * \yxcoord}
                    \pgfmathsetmacro{\zcoord}{\layerstep * (\zrow - 1) - \layerstep * \yzcoord}
                    \DrawGridBox{\xcoord}{\zcoord}{1}{1}{1}        
                }
            }
        }
    \end{scope}
}
\newcommand{\DrawCubeEdgesAlongX}[2]{
    \begin{scope}[shift={(#1, #2)}]
        \foreach \yrow in {1, ..., \VisibilityRange} {
            \pgfmathsetmacro{\yxcoord}{\upperrowmult * tan(90 - \isometricangle) * (\yrow - 1)}
            \pgfmathsetmacro{\yzcoord}{\upperrowmult * (\yrow - 1)}
            \foreach \zrow in {1, ..., \VisibilityRange} {
                \pgfmathsetmacro{\xcoord}{-\layerstep * \yxcoord}
                \pgfmathsetmacro{\zcoord}{\layerstep * (\zrow - 1) - \layerstep * \yzcoord}
                \DrawGridBox{\xcoord}{\zcoord}{\XSize}{1}{1}        
            }
        }
    \end{scope}
}
\newcommand{\DrawCubeEdgesAlongY}[2]{
    \begin{scope}[shift={(#1, #2)}]
        \pgfmathsetmacro{\yxoffset}{\upperrowmult * tan(90 - \isometricangle) * (\YSize - 1)}
        \pgfmathsetmacro{\yzoffset}{\upperrowmult * (\YSize - 1)}
        \foreach \zrow in {1, ..., \VisibilityRange} {
            \foreach \xrow in {1, ..., \VisibilityRange} {
                \pgfmathsetmacro{\xcoord}{\layerstep * (\xrow - 1) - \yxoffset}
                \pgfmathsetmacro{\zcoord}{\layerstep * (\zrow - 1) - \yzoffset}
                \DrawGridBox{\xcoord}{\zcoord}{1}{\YSize}{1}        
            }
        }
    \end{scope}
}
\newcommand{\DrawCubeEdgesAlongZ}[2]{
    \begin{scope}[shift={(#1, #2)}]
        \foreach \yrow in {1, ..., \VisibilityRange} {
            \pgfmathsetmacro{\yxcoord}{\upperrowmult * tan(90 - \isometricangle) * (\yrow - 1)}
            \pgfmathsetmacro{\yzcoord}{\upperrowmult * (\yrow - 1)}
            \foreach \xrow in {1, ..., \VisibilityRange} {
                \pgfmathsetmacro{\xcoord}{\layerstep * (\xrow - 1) - \layerstep * \yxcoord}
                \pgfmathsetmacro{\zcoord}{-\layerstep * \yzcoord}
                \DrawGridBox{\xcoord}{\zcoord}{1}{1}{\ZSize}        
            }
        }
    \end{scope}
}
\newcommand{\DrawCubeFacesAlongX}[2]{
    \begin{scope}[shift={(#1, #2)}]
        \pgfmathsetmacro{\yxoffset}{\upperrowmult * tan(90 - \isometricangle) * (\YSize - 1)}
        \pgfmathsetmacro{\yzoffset}{\upperrowmult * (\YSize - 1)}
        \foreach \xrow in {1, ..., \VisibilityRange} {
            \pgfmathsetmacro{\xcoord}{\layerstep * (\xrow - 1) - \yxoffset}
            \pgfmathsetmacro{\zcoord}{-\yzoffset}
            \DrawGridBox{\xcoord}{\zcoord}{1}{\YSize}{\ZSize}        
        }
    \end{scope}
}
\newcommand{\DrawCubeFacesAlongY}[2]{
    \begin{scope}[shift={(#1, #2)}]
        \foreach \yrow in {1, ..., \VisibilityRange} {
            \pgfmathsetmacro{\yxcoord}{\upperrowmult * tan(90 - \isometricangle) * (\yrow - 1)}
            \pgfmathsetmacro{\yzcoord}{\upperrowmult * (\yrow - 1)}
            \pgfmathsetmacro{\xcoord}{-\layerstep * \yxcoord}
            \pgfmathsetmacro{\zcoord}{-\layerstep * \yzcoord}
            \DrawGridBox{\xcoord}{\zcoord}{\XSize}{1}{\ZSize}        
        }
    \end{scope}
}
\newcommand{\DrawCubeFacesAlongZ}[2]{
    \begin{scope}[shift={(#1, #2)}]
        \pgfmathsetmacro{\yxoffset}{\upperrowmult * tan(90 - \isometricangle) * (\YSize - 1)}
        \pgfmathsetmacro{\yzoffset}{\upperrowmult * (\YSize - 1)}
        \foreach \zrow in {1, ..., \VisibilityRange} {
            \pgfmathsetmacro{\xcoord}{-\yxoffset}
            \pgfmathsetmacro{\zcoord}{\layerstep * (\zrow - 1) - \yzoffset}
            \DrawGridBox{\xcoord}{\zcoord}{\XSize}{\YSize}{1}        
        }
    \end{scope}
}
\newcommand{\DrawCubeInnerCore}[2]{
    \begin{scope}[shift={(#1, #2)}]
        \pgfmathsetmacro{\yxoffset}{\upperrowmult * tan(90 - \isometricangle) * (\YSize - 1)}
        \pgfmathsetmacro{\yzoffset}{\upperrowmult * (\YSize - 1)}
        \pgfmathsetmacro{\xcoord}{-\yxoffset}
        \pgfmathsetmacro{\zcoord}{-\yzoffset}
        \DrawGridBox{\xcoord}{\zcoord}{\XSize}{\YSize}{\ZSize}        
    \end{scope}
}
\title{Patrolling Grids with a Bit of Memory}
\begin{document}

\author[1,2]{Michael Amir}
\author[1]{Dmitry Rabinovich}
\author[1]{Alfred M. Bruckstein}

\affil[1]{University of Cambridge, Cambridge, United Kingdom}
\affil[2]{Technion - Israel Institute of Technology,  Haifa, Israel}

\affil[ ]{Emails: \texttt{ma2151@cam.ac.uk} (M. Amir), \texttt{dmitry.ra@technion.ac.il} (D. Rabinovich), \texttt{freddy@cs.technion.ac.il} (A. M. Bruckstein)}

\maketitle

\begin{abstract}
This work addresses the challenge of patrolling regular grid graphs of any dimension using a single mobile agent with minimal memory and limited sensing range. We show that it is impossible to patrol some grid graphs with $0$ bits of memory, regardless of sensing range, and give an exact characterization of those grid graphs that can be patrolled with $0$ bits of memory and sensing range $V$. On the other hand, we show that an algorithm exists using $1$ bit of memory and $V=1$ that patrols any $d$-dimensional grid graph. This result is surprising given that the agent must be able to move in $2d$ distinct directions to patrol, while $1$ bit of memory  allows specifying only two directions per sensory input. Our $1$-bit patrolling algorithm handles this by carefully exploiting a small state-space to access all the needed directions while avoiding getting stuck. Overall, our results give concrete evidence that extremely little memory is needed for patrolling highly regular environments like grid graphs compared to arbitrary graphs. The techniques we use, such as partitioning the environment into sensing regions and exploiting distinct coordinates resulting from higher-dimensionality, may be applicable to analyzing the space complexity of patrolling in other types of regular environments as well.
\end{abstract}

\section{Introduction}

Patrolling is a key problem in robotics and  operations research wherein a mobile agent or team of mobile agents are tasked with repeatedly visiting  every vertex of a graph environment by traversing edges. This task has well-known applications to navigation, warehouse management, web crawling, and swarm intelligence \cite{gkasieniec2008memory}. Patrolling has been studied under diverse sets of assumptions regarding e.g. the number of agents, the capabilities of each agent, and the underlying graph environment \cite{izumi2022deciding}. A central  question related to patrolling is the \textit{space complexity} of patrolling an environment: the amount of memory required by the agent(s) to patrol the environment \cite{izumi2022deciding,fraigniaud2005graph,dobrev2005finding,beame1998time,diks2004tree,gasieniec2007tree,cohen2017exploring}. Rectangular grid graphs are fundamental in robotics, and are the subject of many works in patrolling \cite{bampas2010almost,cohen2017exploring,dobrev2019exploration}. Despite this fact, and despite the space complexity of patrolling being widely studied, the space complexity of patrolling grid graphs by a single agent has not yet been established.  Hence, the goal of this work is to establish the space complexity of patrolling $d$-dimensional grid graphs with a single mobile agent that has limited visibility.

More concretely, suppose a mobile agent with \textit{fixed orientation}, $\Robot$, is placed somewhere inside a grid graph. We assume $\Robot$ has $b$ bits of state memory that persists between steps and the ability to see locations at Manhattan distance $V$ or less from itself (see \cref{fig:agent.visibility.examples}), and must act based only on this information, i.e., it is a finite automaton enhanced with local geometric information. For what values of $b$ and $V$ does there exist an algorithm that enables $\Robot$ to patrol the grid?

\begin{figure}[ht]
    \centering
    \begin{subfigure}{0.26\textwidth}
        \centering
        \resizebox{\textwidth}{!}{
            \includegraphics[page=1]{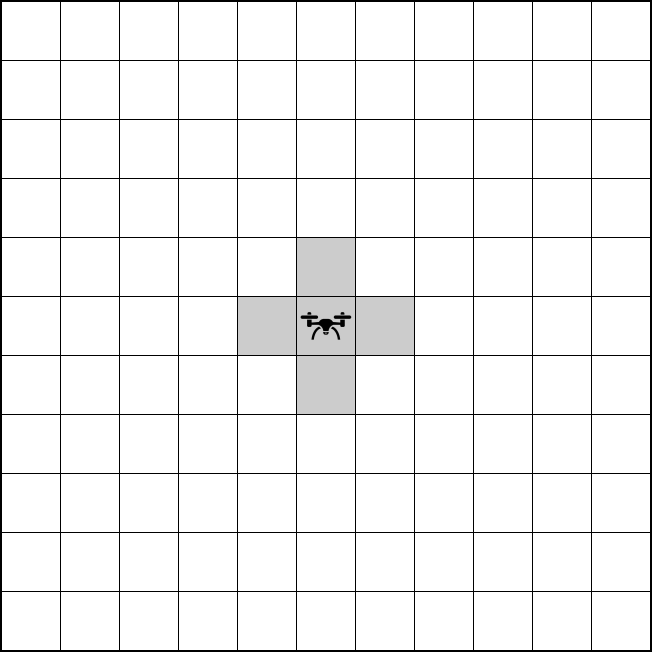}
        }
        \caption{}
        \label{subfig:robot.v=1}
    \end{subfigure}
    %\hfill
    \begin{subfigure}{0.26\textwidth}
        \centering
        \resizebox{\textwidth}{!}{
            \includegraphics[page=1]{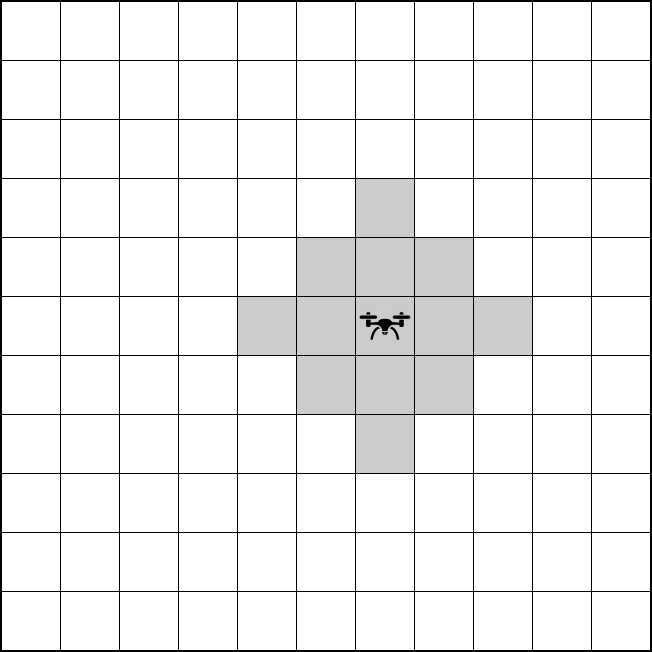}
        }
        \caption{}
        \label{subfig:robot.v=2}
    \end{subfigure}
    \caption{Gray-colored vertices illustrate what our mobile agent can see with sensing range $V=1$ and $V=2$, respectively, in a $2$-dimensional grid graph.}
    \label{fig:agent.visibility.examples}
\end{figure}

One might expect that existing space complexity results for broader classes of environments give good complexity bounds for grid graphs as a special case  - but this does not appear to be true. The strongest result we found in the literature implies a $d$-dimensional grid graph $G$ of diameter $\mathrm{diam}(\mathcal{G})$ can be patrolled with $\mathcal{O}(\mathrm{diam}(\mathcal{G}) \cdot log(d))$ bits \cite{fraigniaud2005graph,fraigniaud2005space}. We recount this result in  \cref{appendix:patrollingingeneral} for the sake of comparison. Generally, existing works set the expectation that the complexity of patrolling a given graph environment grows with diameter and maximum vertex degree (see "Related Work"). On the contrary, we show that, irrespective of diameter and maximum vertex degree (i.e., dimension), there is an algorithm using $1$ bit of memory that patrols all grid graphs. %It is unsurprising that grid graphs are easier to patrol than larger classes of graphs, but it is surprising how they defy expectations set by existing bounds by having constant, minimal space complexity.

%Many results about patrolling graphs are known  - see ``Related Work''  - but as these results do not take advantage of the structure of $d$-dimensional grid graphs in particular, the resulting bounds are quite crude. Here we seek to fill this gap. 

\cref{theorem:grid graphsthatcanbepatrolledwithVvisibility0bits} and \cref{theorem:canpatrolwith1bit} together characterize exactly which grid graphs can be patrolled with sensing range $V$ and $b$ bits of memory, settling our main question.  \cref{theorem:grid graphsthatcanbepatrolledwithVvisibility0bits} gives an exact characterization of $d$-dimensional grid graphs that can be patrolled with sensing range $V$ and $0$ bits of memory. These turn out to be grid graphs that are bounded in all dimensions but at most one, and that have an even number of ``sensing regions'' (regions of vertices that are indistinguishable to $\Robot$). Our proof idea involves partitioning grids into sensing regions and defining a graph based on these regions' adjacency relations, which we show is itself isomorphic to a grid graph. We use this graph to augment the well-known fact that a grid with an odd number of vertices contains no Hamiltonian cycle. 

\SetKwFunction{MakeMoveAll}{MakeMove}

\cref{theorem:canpatrolwith1bit} establishes the correctness of an algorithm that patrols any $d$-dimensional grid graph with sensing range $V = 1$ and $1$ bit of memory. This is despite the fact that, to patrol a $d$-dimensional grid graph, $\Robot$ is required to move in $2d$ directions  (it must be able to increment and decrement its coordinate  along any of the $d$ axes of the grid), which might suggest that the minimum memory required should grow with $d$, and the fact that in a large grid graph, most locations are geometrically indistinguishable to $\Robot$, hence it is unable to localize itself most of the time. Consequently, we consider this result quite unexpected.

The patrolling algorithm we describe is called \MakeMoveAll. We must be more careful when patrolling low-dimensional grids than high-dimensional grids, because low-dimensional grids have fewer unique grid boundaries, and so there is less geometric information for $\Robot$ to exploit. For this reason, \MakeMoveAll handles $1$-, $2$-, and $3$-dimensional grids as special cases, but extends generically to higher dimensions by exploiting the increased number of grid boundaries. The core of the algorithm is in patrolling 2D sub-grids (subspaces of the $d$-dimensional grid of interest), and then incrementing or decrementing the robot's coordinate in some higher dimension. However, due to memory constraints, the sequence of higher-dimensional coordinate changes is not trivial, and the robot may, e.g., patrol part of a lower-dimensional subgrid only to depart and return to it later. This is to maintain a small economy of \textit{transient states} that are used by the robot to access higher dimensions.

Several of our techniques can potentially be applied to non-grid graph settings: first, the idea of partitioning the environment into sensing regions readily generalizes to non-grid environments (as noted in \cref{appendix:patrollingingeneral}) and may yield further space complexity results in such environments. Next, we believe our $1$-bit algorithm can be applied to reducing the space complexity of traversing Cartesian products of non-grid subspaces of $Z^d$, since such products are in many ways analogous to $d$-dimensional grid graphs. We can use the $1$-bit algorithm to dictate the overall ``order'' in which one should traverse the product's coordinates, without keeping track of our global position within the product space. Finally, the observation that patrolling, counter-intuitively, does not require more memory in higher-dimensional grids (because their richer, more distinguishable geometry compensates for their dimensionality) seems exploitable, and may lead to space-efficient patrolling algorithms in more general settings.

\subsection*{Related Work}

Graph exploration by mobile agents is a key topic in algorithm design, robotics, and web crawling \cite{gkasieniec2008memory}. Exploration of an environment with mobile agents presents a considerably different challenge from classical graph search settings. In classical settings, it is assumed that the entire graph can be accessed from memory. Mobile agents, on the other hand, must make decisions using limited memory and only local knowledge of the graph environment. Throughout the years,  considerable work has been devoted to identifying the minimal capabilities required to enable exploration under a diverse set of assumptions, e.g., single agent \cite{dobrev2005finding,cohen2008label} versus group of agents \cite{bampas2010almost,dobrev2019exploration,cohen2017exploring,icalpchalopin2010rendezvous}; exploring graphs with termination conditions \cite{fraigniaud2005space,gasieniec2007tree} versus without   \cite{gkasieniec2008fast,shimoyama2022one}; competitive \cite{bui2023optimal,shats2023competitive} or cooperative \cite{icalphaeupler2019optimal}. In this work we focus on non-terminating grid exploration by a single finite automaton, i.e., \textit{patrolling},  where the aim of the exploration is to ensure every graph vertex is periodically visited by the agent in some deterministic manner.

%A seminal work that studies the capabilities of finite automata in graph exploration is Budach et al. \cite{budach1978automata}, where it is shown that no finite automaton exists that can find its way out of any 2D maze \cite{budach1978automata}. More recently, Kilibarda has shown that any automaton walking rectangular labyrinth can be reduced to an automaton following either left-, or right-hand-on-the wall approach  \cite{kilibarda2017reduction}. Various settings that enhance the graph exploration capabilities of finite automata by letting them place ``pebbles'' have also been studied  \cite{bender1998power,disser2016undirected,bender1998power,alon1990universal,beame1998time}. 

Exploration of known and unknown graphs using finite automata with limited visibility has been studied under various assumptions. In \cite{fraigniaud2005graph,fraigniaud2005space}, it is shown that a graph $G$ with maximum degree $d$ and of diameter $\mathrm{diam}(\mathcal{G})$ can be patrolled with $\mathcal{O}(\mathrm{diam}(\mathcal{G}) \cdot log(d))$. We recount this result in  \cref{appendix:patrollingingeneral} due to its relevance.  In a seminal work by Budach et al. \cite{budach1978automata} it is shown that no finite automaton exists that can find its way out of any 2D maze. More recently, Kilibarda has shown that any automaton walking rectangular labyrinth can be reduced to an automaton following either left-, or right-hand-on-the wall approach  \cite{kilibarda2017reduction}. It is known that finite automata can patrol graphs if the graph is pre-processed so as to label the edges \cite{ilcinkas2008setting,gkasieniec2008fast,cohen2008label,dobrev2005finding,czyzowicz2012more,alon1990universal}. Exploring arbitrary graphs without such pre-processing is known to require considerably more memory, and, in most problem settings, cannot be done by finite automata  \cite{diks2004tree,gasieniec2007tree,fraigniaud2005graph,fraigniaud2005space}. A particularly popular setting enhances the graph exploration capabilities of finite automata by letting them place ``pebbles''  \cite{bender1998power,disser2016undirected,beame1998time}. 

The problem of low-complexity patrolling of regular grid graphs that we study in this paper has surprising applications in autonomous traffic management \cite{rabinovich2022emerging}, where it can be used for high-level coordination of fleets of cars in a low-complexity, error-resilient fashion, and has been discussed in the context of optimal paths for lawn mowing by a single robot  \cite{arkin2000approximation}.

%: for exploration without stop $O(\Delta)$ \cite{diks2004tree}, while with stop the task is noticeably more difficult and requires $O(\log n)$ memory \cite{diks2004tree}, \cite{gasieniec2007tree}, \cite{reingold2008undirected}, \cite{fraigniaud2005space}, \cite{fraigniaud2005graph}.

Our results show that, for the purposes of grid exploration, an agent with $1$ bit of memory is significantly more powerful than an agent with $0$ bits of memory (also called an ``oblivious'' agent). Some other recent works have shown similar gaps between $0$- and $1$-bit agents in graph search tasks such as deciding graph properties \cite{izumi2022deciding} and exploring cactus graphs \cite{shimoyama2022one}. The authors find fascinating this enormous leap in capabilities between memoryless agents and agents that have any internal memory at all, even a single bit.

\section{Problem Statement}

A $d$-dimensional grid graph $\Dcube$ is a graph whose vertex set is $[n_1] \times [n_2] \times \ldots [n_d]$, $\forall i, n_i > 1$, where $[n]$ is the set of integers $k$ such that $1 \leq k \leq n$, and where there is an edge between any two vertices at Manhattan distance $1$ from each other. A mobile agent,  $\Robot$, is initialized at some arbitrary vertex $v_0 \in \Dcube$ and is tasked with visiting every vertex in $\Dcube$ by traversing edges. Hence, $\Robot$ traverses $\Dcube$ by incrementing or decrementing one of $d$ different coordinates in unit steps.

Our model of movement and sensing is standard in works on single- and multi-robot navigation \cite{hsiang_algorithms_2004,agmon2009multi,amir_minimizing_2019,amir_rappel2023stigmergy}. We assume $\Robot$ has fixed orientation (i.e., a ``compass''  - it distinguishes  between the $d$ axes of $\Dcube$ and chooses which one it wants to move along) and that, when $\Robot$ is located at $\Position$, it \textit{senses} vertices of $\Dcube$ at Manhattan distance $V$ or less from $\Position$. $\Robot$ knows the \textit{position relative to $p$} of every vertex of $\Dcube$ it senses, given by the set:

\begin{equation*}
\Sense{V}{\Position} \defeq \Set{\Position' - \Position}{\Position' \in \Dcube, \lVert \Position' - \Position \rVert_1 \leq V}
\end{equation*}

where $\Position' - \Position$ is the position of $\Position'$ relative to $\Position$. $V$ is called $\Robot$'s \textit{sensing} or \textit{visibility range}. We further assume $\Robot$ has access to $b$ bits of persistent state memory which it can access and modify  - these bits represent a persistent state that $\Robot$ maintains between steps.  At every step, $\Robot$ must decide its next action \textit{deterministically} based only on $\Sense{V}{\Position}$ and its $b$ bits of memory. Hence, it can be considered a finite automaton enhanced with local geometric information. More formally, $\Robot$'s algorithm is restricted to be  a function $\mathrm{ALG}(\mathbb{S}, mem)$ that takes as the input $\Robot$'s current sensing data $\Sense{V}{\Position}$ and memory state, and outputs $\Robot$'s next step and memory state.

When two vertices $p$ and $\Position'$ have $\Sense{V}{\Position} = \Sense{V}{\Position'}$, they are indistinguishable from $\Robot$'s perspective and thus must be treated identically by its traversal algorithm.  Figures \ref{fig:sensingregions2D} and \ref{fig:sensingregions3D} illustrate, in 2- and 3-dimensional grid graphs respectively,  regions of vertices for which $\Sense{V}{\Position}$ is identical given sensing range $V=1\ldots 3$. The closer $\Robot$ is to a boundary of $\Dcube$ the smaller the set $\Sense{V}{\Position}$ becomes, because $\Sense{V}{\Position}$ only contains locations inside $\Dcube$. Hence, the closer $\Robot$ is to a boundary, the more information it has about its current position.

The central question we wish to answer is: given sensing range $V$ and $b$ bits of memory, does there exist a \textit{deterministic} function $ALG(\cdot, \cdot)$ that enables $\Robot$ to visit every vertex in $\Dcube$? Formally: % The algorithm is allowed to depend on the dimensions of $\Dcube$ but cannot depend on $v_0$. 

\begin{definition}
An algorithm $\mathrm{ALG}$ is said to \textbf{patrol} $\mathcal{L} \subset \Dcube$ if, given any initial position $v_1 \in \mathcal{L}$ and initial memory state, executing $\mathrm{ALG}$ causes $\Robot$ to visit all vertices  $v \in \mathcal{L}$ within a finite number of steps.
\label{definition:patrolling}
\end{definition}

Note that \cref{definition:patrolling} implies, in particular, that $\mathrm{ALG}$ will visit each vertex of $\mathcal{L}$ an unlimited number of times (if we let it run indefinitely), hence the term ``patrolling''. We are generally interested in the case $\mathcal{L} = \Dcube$ (i.e., patrolling the entire grid graph), but shall sometimes let $\mathcal{L}$ be a strict subset of $\Dcube$ in proofs. 

%$\Dcube$ can be viewed as a Cartesian product of 1-dimensional path graphs of varying lengths. Later in this work, we shall also consider the problem of patrolling general subgraphs of $\mathbb{Z}^d$ and their Cartesian products.

\section{Patrolling with 0 Bits of Memory}
\label{section:patrol0bit}

We first consider the problem of visiting every vertex in $\Dcube$ assuming $\Robot$  has sensing range $V$ and $0$ bits of internal state memory. We shall show that, in this setting, whether $\Robot$ is capable of patrolling $\Dcube$ depends on the dimensions of $\Dcube$. Our main result is the following exact characterization of the set of grid graphs that can be patrolled:

\begin{theorem} Let $\Dcube = [n_1] \times [n_2] \times \ldots [n_d]$. There exists an algorithm that patrols $\Dcube$ with $0$ bits of memory and $V$ sensing range if and only if:

\begin{enumerate}
    \item There is at most one index $i$ such that $n_i > 2V + 1$, and
    \item $\prod_{i=1}^{d} \min \big( {n_{i},2V+1} \big)$ is even or equals $1$ (i.e., $\Dcube$ contains a single vertex).
\end{enumerate}
\label{theorem:grid graphsthatcanbepatrolledwithVvisibility0bits}
\end{theorem}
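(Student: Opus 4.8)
The plan is to reduce the existence of a $0$-bit patrolling algorithm to a question about Hamiltonian cycles in an auxiliary grid graph. First I would pin down the sensing structure: for a vertex $\Position=(p_1,\dots,p_d)$, $\Sense{V}{\Position}$ is determined by, and determines, the tuple $\big(\min(p_i-1,V),\,\min(n_i-p_i,V)\big)_{i=1}^d$, which along axis $i$ takes exactly $k_i:=\min(n_i,2V+1)$ distinct values (all coordinates are distinguishable when $n_i\le 2V+1$; otherwise there are $V$ low-boundary values, $V$ high-boundary values, and one ``middle'' class). So the sensing regions are products of the per-axis classes, there are $\prod_i k_i$ of them, and each is a sub-box of $\Dcube$ that fixes a single coordinate on every ``short'' axis and is either a boundary coordinate or the whole middle interval on every ``long'' axis. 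I would then introduce the sensing-region graph $\SenseGraph{\Dcube}{V}$ (regions as vertices; two regions adjacent iff each contains an endpoint of some edge of $\Dcube$) and prove the structural lemma $\SenseGraph{\Dcube}{V}\cong[k_1]\times\cdots\times[k_d]$ — a region is determined by its class-vector, and two regions are adjacent exactly when their class-vectors differ in one coordinate by $\pm1$. Finally I would record the dynamical reformulation: since the agent is deterministic and memoryless, its algorithm is exactly a \emph{sensing-consistent} map $m:\Dcube\to\Dcube$ (each $m(\Position)$ is a neighbour of $\Position$, and $m(\Position)-\Position$ depends only on $\Sense{V}{\Position}$), and $m$ patrols $\Dcube$ iff $m$ is a permutation of $\Dcube$ whose functional graph is a single cycle through all $|\Dcube|$ vertices — the definition of patrolling forces the eventually-periodic orbit from each vertex to be one full period through every vertex. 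Thus the theorem becomes: a sensing-consistent Hamiltonian cycle of $\Dcube$ exists iff (1) and (2) hold.

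For the ``only if'' direction, take such a cycle $C$ with underlying map $m$. The main obstacle is proving (1), that at most one axis is long: if axes $1,2$ were both long, the ``fully interior'' region (middle class on every long axis, a fixed coordinate on every short axis) is a box with $\ge2$ vertices along each of axes $1,2$, and $m$ restricted to it is translation by a single fixed unit vector $u^*$; I expect ruling this out to require a delicate argument, propagating this rigidity via $m$ and $m^{-1}$ outward (using that $m$ is a bijection that never leaves $\Dcube$ and that whole interior ``slabs'' are single sensing regions) until one reaches a family of parallel $C$-segments that cannot be joined into a single cycle, with sub-cases on $u^*$ and on $V$. Granting (1), assume axis $1$ is the unique long axis (if any). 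Each middle fibre $F_q=\{(p_1,q):V+1\le p_1\le n_1-V\}$ is one sensing region, so $m$ translates $F_q$ by a fixed unit vector; a clean argument shows this vector is $\pm e_1$ — if some $F_q$ translated ``sideways'' into a parallel fibre $F_{q'}$, then $F_{q'}$ would also have to translate sideways (otherwise $m$ collides), and the resulting chain of fibres is finite and must terminate in a collision or a short $m$-cycle, a contradiction. Hence $C$ crosses every $F_q$ as one straight segment, so contracting each $F_q$ to a point turns $C$ into a closed spanning walk with no repeated vertex of the graph obtained from $\Dcube$ by collapsing axis $1$'s middle to one layer, which is exactly $\SenseGraph{\Dcube}{V}\cong[k_1]\times\cdots\times[k_d]$. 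A grid graph is bipartite, so a Hamiltonian cycle in it forces its order $\prod_i k_i$ to be even (the remaining case $\prod_i k_i=1$ being the single-vertex grid), i.e.\ (2).

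For ``if'', run this backwards. By (1) assume axes $2,\dots,d$ are short; by (2), $\prod_i k_i$ is even (apart from the trivial single-vertex case). If axis $1$ is short too, the agent localises exactly, so it suffices to trace any Hamiltonian cycle of $\Dcube=\SenseGraph{\Dcube}{V}$, which exists by the classical characterization of Hamiltonian grid graphs. If axis $1$ is long, I would first build a Hamiltonian cycle $H$ of $\SenseGraph{\Dcube}{V}\cong[2V+1]\times[n_2]\times\cdots\times[n_d]$ that crosses axis $1$'s middle layer \emph{straight} (in the axis-$1$ direction) at every one of its vertices; contracting each such crossing shows this amounts to a Hamiltonian cycle of a ``prism''-type graph using all its rungs, which exists because $\prod_{i\ge2}n_i$ is even (an even-order grid graph has a perfect matching, which can be routed through). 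Lifting $H$ to $\Dcube$ — replacing each straight crossing of the middle layer by the full sweep of the corresponding middle fibre — gives a Hamiltonian cycle $C$ of $\Dcube$ that is sensing-consistent: on the boundary zones the agent localises so any move is legal, while every middle-fibre vertex is assigned the common move $+e_1$, which is consistent with axis $1$ having a single middle sensing class. By the dynamical reformulation, following $C$ patrols $\Dcube$, completing the characterization.
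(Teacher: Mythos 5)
Your overall architecture---sensing regions, the isomorphism $\SenseGraph{\Dcube}{V}\cong[m_1]\times\cdots\times[m_d]$ with $m_i=\min(n_i,2V+1)$, the reformulation of a $0$-bit algorithm as a deterministic out-degree-one map whose functional graph must be a single spanning cycle, and the bipartite parity obstruction for condition (2)---matches the paper's. The genuine gap is the necessity of condition (1), which you explicitly leave as a hoped-for ``delicate argument'' with sub-cases on $u^*$ and on $V$. No such case analysis is needed, and propagating rigidity outward via $m$ and $m^{-1}$ is the wrong tool. The missing observation is that the region-level dynamics is \emph{itself} a deterministic out-degree-one map on $\SenseGraph{\Dcube}{V}$ (the move, hence the exit direction, hence the next region, depends only on the current region), so the argument of \cref{lemma:coveringispatrollingwhenV=0} applies verbatim one level up: the induced closed walk on sensing regions is a single cycle through all regions, so $\Robot$ enters each sensing region exactly once per period and, while inside it, traverses a single straight segment. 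If two axes are long, the fully-interior region contains a $2\times 2$ block, which no straight segment can cover---contradiction (this is exactly \cref{lemma:nopatrol2x2sensingsubregion}). This one observation replaces your entire proposed analysis, and it also supplies the ``no repeated vertex'' property you need when contracting fibres for condition (2).

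The sufficiency direction is also incomplete in the long-axis case. You reduce it to the existence of a Hamiltonian cycle of $\SenseGraph{\Dcube}{V}$ that crosses \emph{every} vertex of the middle layer straight, and then assert this exists ``because an even-order grid graph has a perfect matching, which can be routed through.'' Forcing a Hamiltonian cycle to use all $\prod_{i\ge 2}n_i$ rungs of a prescribed perfect matching between two adjacent layers is precisely the nontrivial combinatorial content, and the paper itself warns that an arbitrary Hamiltonian cycle of the region graph need not lift to a sensing-consistent one; it therefore exhibits explicit memoryless rules (a zig-zag over the $x_1>1$ part joined to a return path along the $x_1=1$ slab when $V>1$, and a separate parity-based construction when $V=1$) and checks directly that they trace a Hamiltonian cycle. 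Until you actually construct your constrained cycle, or fall back on such an explicit rule, the ``if'' direction is not established. (A minor point: distinct middle fibres $F_q$ and $F_{q'}$ are distinct sensing regions, so they may---and in a cycle must---be assigned opposite directions $\pm e_1$; your phrase ``a single middle sensing class'' risks conflating them.)
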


Condition (1) of \cref{theorem:grid graphsthatcanbepatrolledwithVvisibility0bits} says that patrollable $d$-dimensional grids are bounded in all dimensions but at most one, and Condition (2) says they must have an even number of ``sensing regions''. Sensing regions - to be formally defined later in this section - are regions of vertices that are indistinguishable to $\Robot$ (see Figures \ref{fig:sensingregions2D} and \ref{fig:sensingregions3D}). The idea behind the proof of \cref{theorem:grid graphsthatcanbepatrolledwithVvisibility0bits} is to partition $\Dcube$ into sensing regions and to show that if there is an odd number of such regions, or if any such region contains a $2 \times 2$ subregion, $\Dcube$ cannot be patrolled. Vice versa, when the conditions of \cref{theorem:grid graphsthatcanbepatrolledwithVvisibility0bits} hold, we shall describe explicit algorithms for patrolling $\Dcube$: \ref{alg:V=1 b=0-Patrolling Algorithm} and \ref{alg:Vmorethan1 m=0-Patrolling Algorithm}. 

One caveat regarding Algorithms \ref{alg:V=1 b=0-Patrolling Algorithm} and \ref{alg:Vmorethan1 m=0-Patrolling Algorithm} is that they require the dimensions of $\Dcube$ to be known to $\Robot$ in advance. This does not falsify \cref{theorem:grid graphsthatcanbepatrolledwithVvisibility0bits}, since the Theorem is concerned with whether an $0$-bit algorithm exists for patrolling a \textit{given} grid graph, and we may embed information about said grid graph in the algorithm. However, it is desirable to find algorithms that can patrol $\Dcube$ without knowing its dimensions. Later, in \cref{section:patrol1bit}, we shall give an algorithm for patrolling any arbitrary $d$-dimensional grid graph using $1$ bit of memory that works generically, without requiring knowledge of $\Dcube$'s dimensions in advance.

Before proving \cref{theorem:grid graphsthatcanbepatrolledwithVvisibility0bits}, let us build some intuition about patrolling with $0$ bits of memory. In particular, let us show that any algorithm that patrols the grid $\Dcube$ also finds a Hamiltonian cycle of $\Dcube$.

\begin{definition}
    Let $\mathrm{ALG}$ be an algorithm that patrols $\Dcube$.  Suppose $\Robot$ executes $\mathrm{ALG}$ in $\Dcube$ starting from position $v_1$, and let $v_i$ be the vertex of $\Robot$ after $i$ steps. The \textbf{walk from $v_1$ induced by $\mathrm{ALG}$}, denoted $\mathcal{W}_{\mathrm{ALG}}(v_1)$, is the walk
    $v_1 \ldots v_T$ where $T$ indicates the first step at which $\Robot$ has visited all vertices of  $\Dcube$. The \textbf{length  of $\mathcal{W}_{\mathrm{ALG}}(v_1)$} is $T$.
\label{definition:walkinducedbyalg}
\end{definition}

\begin{lemma}
Suppose $\mathrm{ALG}$ is an algorithm that patrols $\Dcube$ using $0$ bits of memory and sensing range $V$. Let $\mathcal{W}_{\mathrm{ALG}}(v_1) = v_1 \ldots v_T$. Then $v_1 \ldots v_T v_1$ is a Hamiltonian cycle of $\Dcube$, and $T = \prod_{i=1}^{d} n_i$.
\label{lemma:coveringispatrollingwhenV=0}
\end{lemma}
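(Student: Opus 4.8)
The plan is to exploit the fact that, with $0$ bits of memory, the agent is a \emph{deterministic function of its current vertex alone}. First I would observe that since $\Robot$ has no persistent state to vary, $\mathrm{ALG}(\mathbb{S}, mem)$ depends only on the sensing data $\Sense{V}{v}$; and since two vertices with the same sensing data must be acted on identically, the relative step taken at a vertex $v$ is determined by $v$. Hence there is a well-defined successor function $f \colon \Dcube \to \Dcube$, where $f(v)$ is the vertex $\Robot$ moves to whenever it occupies $v$. Consequently the walk $\mathcal{W}_{\mathrm{ALG}}(v_1) = v_1 v_2 \ldots$ is exactly the forward orbit $v_1, f(v_1), f^2(v_1), \ldots$, and more generally the walk from any start $u$ is the orbit of $u$ under $f$.

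Next I would run the standard ``functional-graph rho'' argument: since $\Dcube$ is finite and $f$ is a function, the orbit of $v_1$ is eventually periodic, so there exist a minimal tail length $s \geq 0$ and period $\ell \geq 1$ with $v_1, \ldots, v_{s+\ell}$ pairwise distinct, $v_{s+1+\ell} = v_{s+1}$, and the set of vertices visited infinitely often equal to the cycle $C = \{v_{s+1}, \ldots, v_{s+\ell}\}$ of size $\ell$. The key use of the hypothesis comes here: by \cref{definition:patrolling}, $\mathrm{ALG}$ patrols $\Dcube$ starting from \emph{every} vertex, in particular from a vertex $u \in C$; but the orbit of such a $u$ under $f$ is just the cycle $C$ traversed repeatedly, which visits precisely the vertices of $C$. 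Therefore $C = \Dcube$, so $\ell = |\Dcube| = \prod_{i=1}^{d} n_i =: n$.

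Combining $\ell = n$ with the bound $s + \ell \leq n$ (forced by the distinctness of $v_1, \ldots, v_{s+\ell}$) gives $s = 0$: the tail is empty, $v_1$ lies on the cycle, $v_1, \ldots, v_n$ are distinct and exhaust $\Dcube$, and $f(v_n) = v_{n+1} = v_1$. Since consecutive vertices of the walk are joined by an edge of $\Dcube$ (the agent moves along edges), $v_1 v_2 \cdots v_n v_1$ is a Hamiltonian cycle of $\Dcube$. Finally, to pin down $T$: at step $n$ the robot has seen $v_1, \ldots, v_n = \Dcube$, whereas at every earlier step it has seen at most $n-1$ distinct vertices, so $T = n = \prod_{i=1}^{d} n_i$, as claimed. (Equivalently, one may invoke the Remark after \cref{definition:patrolling} to get $C = \Dcube$ directly, rather than restarting from $u \in C$.)

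The main obstacle is conceptual rather than computational: one must argue carefully that ``$0$ bits of memory'' really does collapse $\mathrm{ALG}$ to a map depending only on $\Sense{V}{v}$, and hence yields a genuine successor function $f$ on $\Dcube$ — this is the step where the precise model definition is needed. Everything downstream (eventual periodicity, $C = \Dcube$, empty tail, $T = n$) is routine once $f$ is in hand.
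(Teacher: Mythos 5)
Your proposal is correct and follows essentially the same route as the paper: both arguments reduce the $0$-bit algorithm to a successor function on $\Dcube$ (an out-degree-$1$ functional graph) and then use the fact that $\mathrm{ALG}$ must patrol from \emph{every} starting vertex to force that functional graph to be a single directed cycle spanning $\Dcube$. Your rho-shaped-orbit phrasing (cycle $C$ must equal $\Dcube$, hence the tail is empty) is just a slightly more explicit unwinding of the paper's observation that out-degree $1$ plus all-pairs reachability yields a spanning directed cycle.
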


%\mathrm{ALG} just directs all vertices! I'm overcomplicating it!

\begin{proof}
Since $\mathrm{ALG}$ uses $0$ bits of internal memory, it determines $\Robot$'s next step \textit{uniquely} based on the vertex $\Robot$ is currently located at. Let $\Dcube_{\mathrm{ALG}}$ be the directed graph whose vertices are those of $\Dcube$ and where there is an edge $(u,v)$ if, whenever $\Robot$ is at $v$, $\mathrm{ALG}$ causes it to step to $u$. We note two facts: (i) The out-degree of any vertex in $\Dcube_{\mathrm{ALG}}$ is  $1$. (ii) Since $\mathrm{ALG}$ patrols $\Dcube$, there is a directed path in $\Dcube_{\mathrm{ALG}}$ from any vertex $v$ in $\Dcube$ to any other vertex $u$ (including from $v$ to itself). From (i) and (ii) it necessarily follows that $\Dcube_{\mathrm{ALG}}$ is a directed cycle spanning all vertices of $\Dcube$. Hence $v_1 \ldots v_T v_1$ is a Hamiltonian cycle of $\Dcube$ and $T = \prod_{i=1}^{d} n_i$.
\end{proof}

Given some patrolling algorithm $\mathrm{ALG}$, we shall refer to $v_1 \ldots v_T v_1$ as the \textbf{Hamiltonian cycle from $v_1$ induced by $\mathrm{ALG}$}, denoted $\mathcal{C}_{\mathrm{ALG}}(v_1)$.

\cref{lemma:coveringispatrollingwhenV=0} says that any algorithm that visits all vertices of $\Dcube$ with $0$ bits of memory must repeatedly traverse a Hamiltonian cycle of $\Dcube$. As a preliminary result, this fact immediately allows us to prove a special case of \cref{theorem:grid graphsthatcanbepatrolledwithVvisibility0bits}. Let us recall the following well-known fact (see, e.g., \cite{moore2011nature}):

\begin{lemma}
Let $\Dcube = [n_1] \times [n_2] \times \ldots [n_d]$. If $\prod_{i=1}^{d} n_{i}$ is odd, no Hamiltonian cycle of $\Dcube$ exists \cite{moore2011nature}.
\label{lemma:hamiltoniancyclerequiresevenproduct}
\end{lemma}

\begin{proof}
    Color a vertex $(x_1,x_2 \ldots x_d)$ of $\Dcube$ ``red'' if the sum of its coordinates is even, and ``blue'' otherwise. Since every edge of $\Dcube$ connects vertices of different colors, and $\prod_{i=1}^{d} n_{i}$ is odd, a Hamiltonian path of $\Dcube$ must start and end at vertices of the same color. These two vertices cannot be connected by an edge, and so it is impossible to form a Hamiltonian cycle of $\Dcube$.
\end{proof}

\begin{corollary}[Special case of \cref{theorem:grid graphsthatcanbepatrolledwithVvisibility0bits}]
Let $\Dcube = [n_1] \times [n_2] \times \ldots [n_d]$. If $\prod_{i=1}^{d} n_{i}$ is odd and greater than $1$, no algorithm exists that can patrol $\Dcube$ with $0$ bits of memory.
\label{corollary:nopatrolwhen0bitsandoddproductofdimensions}
\end{corollary}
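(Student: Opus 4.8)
The plan is to derive this immediately as a contradiction from the two preceding lemmas. Suppose, for contradiction, that some algorithm $\mathrm{ALG}$ patrols $\Dcube$ using $0$ bits of memory and sensing range $V$, where $\prod_{i=1}^{d} n_i$ is odd and strictly greater than $1$. Fix an arbitrary starting vertex $v_1$ and consider the induced walk $\mathcal{W}_{\mathrm{ALG}}(v_1) = v_1 \ldots v_T$ from \cref{definition:walkinducedbyalg}.

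By \cref{lemma:coveringispatrollingwhenV=0}, since $\mathrm{ALG}$ patrols $\Dcube$ with $0$ bits of memory, the closed walk $v_1 \ldots v_T v_1$ is a Hamiltonian cycle of $\Dcube$ (and $T = \prod_{i=1}^{d} n_i$). On the other hand, by \cref{lemma:hamiltoniancyclerequiresevenproduct}, because $\prod_{i=1}^{d} n_i$ is odd, $\Dcube$ admits \emph{no} Hamiltonian cycle. These two statements are contradictory, so no such $\mathrm{ALG}$ can exist. The hypothesis $\prod_{i=1}^{d} n_i > 1$ is what guarantees $\Dcube$ is not a single-vertex graph, so the notion of a Hamiltonian cycle (equivalently, the conclusion of \cref{lemma:coveringispatrollingwhenV=0} that $\Dcube_{\mathrm{ALG}}$ is a nontrivial directed cycle) is meaningful and genuinely obstructed here.

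There is essentially no obstacle: the entire content has already been established in \cref{lemma:coveringispatrollingwhenV=0} and \cref{lemma:hamiltoniancyclerequiresevenproduct}, and this corollary is simply the juxtaposition of the two. The only point requiring a word of care is confirming that the ``$>1$'' clause rules out the degenerate single-vertex case (which is vacuously patrollable), so that the Hamiltonicity argument applies without exception; this is precisely why the statement excludes it.
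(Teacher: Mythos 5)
Your proposal is correct and follows exactly the paper's own argument: assume such an algorithm exists, invoke \cref{lemma:coveringispatrollingwhenV=0} to obtain a Hamiltonian cycle of $\Dcube$, and contradict \cref{lemma:hamiltoniancyclerequiresevenproduct}. Your added remark about the ``$>1$'' clause excluding the degenerate single-vertex case is a reasonable (and harmless) elaboration beyond what the paper writes.
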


\begin{proof}
Assume for contradiction some algorithm patrols $\Dcube$ using $0$ bits of memory. Then \cref{lemma:coveringispatrollingwhenV=0} says a Hamiltonian cycle of $\Dcube$ exists, contradicting \cref{lemma:hamiltoniancyclerequiresevenproduct}.
\end{proof}

Suppose $\Robot$ is located at $\Position = (x_1, x_2 \ldots, x_d)$. We need a convenient way to represent what $\Robot$ senses along some dimension $i$ at position $\Position$. To this end, we project  $\Sense{V}{\Position}$ onto dimension $i$ and measure the distance to the boundaries of $\Dcube$:

\begin{definition}
We define a ``projection'' of $\Sense{V}{\Position}$ onto dimension $i$ as follows:

\begin{equation}
    \Projection{i}{V}{\Position} \defeq \Set{\Position' - \Position \in \Sense{V}{\Position}}{\Position' \in \Dcube, \Position' - \Position = (0,\ldots 0, x_i' - x_i, 0, \ldots 0)}
\end{equation}

We further define the \textbf{boundary distances from $\Position$ along dimension $i$} as $(l_i, r_i)$ where $l_i = |\Set{\Position' - \Position \in \Projection{i}{V}{\Position}}{p' \in \Dcube, x_i' < x_i}|$ and $r_i = |\Set{\Position' - \Position \in \Projection{i}{V}{\Position}}{p' \in \Dcube, x_i' > x_i}|$.

\label{definition:projection_li_ri}
\end{definition}

$l_i$ and $r_i$ are the number of negative and positive unit steps (respectively) that $\Robot$ can move in dimension $i$ before hitting a boundary of $\Dcube$, but they are bounded above by $V$. When $l_i = r_i = V$, $\Robot$ cannot tell how far it is from a boundary of $\Dcube$ along dimension $i$. When $l_i < V$ or $r_i < V$, however, $\Robot$ can sense how close it is to a boundary of $\Dcube$ along dimension $i$, which helps localization. %, thus it knows something about its current position $\Position$:  $x_i = l_i + 1$. Likewise, when $r_i < V$, $\Robot$ knows that $x_i = r_i + 1$.

Because $\Robot$ has a sensing range of $V$, when $\Dcube$ is large there will inevitably exist indistinguishable vertices $p$ and $\Position'$ for which $\Sense{V}{\Position} = \Sense{V}{\Position'}$. Let us define an equivalence relation on such vertices:

\begin{definition}
    Let $\Position, \Position' \in \Dcube$. We define the relation $\Position \sim \Position'$ if $\Sense{V}{\Position} = \Sense{V}{\Position'}$. The equivalence class of $\Position$, denoted $\tilde{\Position} \defeq \Set{\Position'}{\Position' \sim \Position}$, is called \textbf{the sensing region induced by $\Position$}.
    \label{definition:sensingregions}
\end{definition}

\begin{definition}
    Let $\SenseGraph{\Dcube}{V}$ be the graph whose vertices are the different sensing regions of $\Dcube$ given sensing range $V$, and where there is an edge $(\tilde{\Position}, \tilde{\Position'})$ iff there exist vertices $v \in \tilde{\Position}$ and $v' \in \tilde{\Position'}$, where $\tilde{\Position}$ and $\tilde{\Position'}$ are distinct sensing regions,  such that $(v,v')$ is an edge of $\Dcube$. $\SenseGraph{\Dcube}{V}$ is called \textbf{the sensing region graph of $\Dcube$ given visibility $V$}.
\end{definition}

\begin{figure}
    \centering
    \begin{subfigure}{0.26\textwidth}
        \centering
        \pgfmathsetmacro{\VisibilityRange}{1}
        \resizebox{0.99\textwidth}{!}{
            \begin{tikzpicture}
    \pgfmathsetmacro{\CoreWidth}{2 * \VisibilityRange + 1}
    \pgfmathsetmacro{\RectSize}{2 * \VisibilityRange + \CoreWidth}

    % draw fill colored areas
    \begin{scope}[opacity=0.6]
        \foreach \i in {1, ..., \VisibilityRange} {
            \pgfmathsetmacro{\grayvalue}{\i / (\VisibilityRange + 5)}
            \definecolor{fillcolor}{gray}{\grayvalue}
            \begin{scope}[fill=fillcolor]
                \fill (0, \i - 1) rectangle (\RectSize, \i);
                \fill (0, \RectSize - \i) rectangle (\RectSize, \RectSize - \i + 1);
                \fill (\i - 1, 0) rectangle (\i, \RectSize);
                \fill (\RectSize - \i, 0) rectangle (\RectSize - \i + 1, \RectSize);
            \end{scope}
        }
        \fill[black!2] (0, \VisibilityRange) rectangle (\RectSize, \RectSize - \VisibilityRange);
        \fill[black!2] (\VisibilityRange, 0) rectangle (\RectSize - \VisibilityRange, \RectSize);
    \end{scope}

    % draw underlying grid
    \draw[opacity = 0.4, dashed] (0, 0) grid (\RectSize, \RectSize);
    % draw border lines
    \draw[line width=2pt] (0, 0) rectangle (\RectSize, \RectSize);
    \begin{scope}[line width = 1pt]
        \foreach \i in {1, ..., \VisibilityRange} {
            \draw (0, \i) -- (\RectSize, \i);
            \draw (0, \RectSize - \i) -- (\RectSize, \RectSize - \i);
            \draw (\i, 0) -- (\i, \RectSize);
            \draw (\RectSize - \i, 0) -- (\RectSize - \i, \RectSize);
        }
    \end{scope}

    % draw white indices in corners
    \begin{scope}[shift={(0.5, 0.5)}]
        \begin{scope}[white]
            \foreach \row in {1, ..., \VisibilityRange} {
                \foreach \col in {1, ..., \VisibilityRange} {
                    \pgfmathtruncatemacro{\rowstart}{(\row - 1) * (2 * \VisibilityRange + 1)}
                    \pgfmathtruncatemacro{\index}{\rowstart + \col}
                    \node at (\col - 1, 3 * \VisibilityRange + 1 + \VisibilityRange - \row) {$\index$};
    
                    \pgfmathtruncatemacro{\index}{\rowstart + \VisibilityRange + 1 + \col}
                    \node at (3 * \VisibilityRange + 1 + \col - 1, 3 * \VisibilityRange + 1 + \VisibilityRange - \row) {$\index$};
    
                    \pgfmathtruncatemacro{\rowstart}{(\VisibilityRange + \row) * (2 * \VisibilityRange + 1)}
                    \pgfmathtruncatemacro{\index}{\rowstart + \col}
                    \node at (\col - 1, \VisibilityRange - \row) {$\index$};
    
                    \pgfmathtruncatemacro{\index}{\rowstart + \VisibilityRange + 1 + \col}
                    \node at (3 * \VisibilityRange + 1 + \col - 1, \VisibilityRange - \row) {$\index$};
                }
            }
        \end{scope}
        \begin{scope}[black]
            \foreach \row in {1, ..., \VisibilityRange} {
                \pgfmathtruncatemacro{\rowstart}{(\row - 1) * (2 * \VisibilityRange + 1)}
                \pgfmathtruncatemacro{\index}{\rowstart + \VisibilityRange + 1}
                \node at (2 * \VisibilityRange, 3 * \VisibilityRange + 1 + \VisibilityRange - \row) {$\index$};

                \pgfmathtruncatemacro{\rowstart}{(\VisibilityRange + \row) * (2 * \VisibilityRange + 1)}
                \pgfmathtruncatemacro{\index}{\rowstart + \VisibilityRange + 1}
                \node at (2 * \VisibilityRange, \VisibilityRange - \row) {$\index$};
            }
            \pgfmathtruncatemacro{\rowstart}{(\VisibilityRange) * (2 * \VisibilityRange + 1)}
            \foreach \col in {1, ..., \VisibilityRange} {
                \pgfmathtruncatemacro{\index}{\rowstart + \col}
                \node at (\col - 1, 2 * \VisibilityRange) {$\index$};

                \pgfmathtruncatemacro{\index}{\rowstart + \VisibilityRange + 1 + \col}
                \node at (3 * \VisibilityRange + \col, 2 * \VisibilityRange) {$\index$};
            }            
            \pgfmathtruncatemacro{\index}{\rowstart + \VisibilityRange + 1}
            \node at (2 * \VisibilityRange, 2 * \VisibilityRange) {$\index$};
        \end{scope}
    \end{scope}
\end{tikzpicture}
        }
        \caption{$V = 1$}
    \end{subfigure}%
    \hfill
    \begin{subfigure}{0.25\textwidth}
        \centering
        \pgfmathsetmacro{\VisibilityRange}{2}
        \resizebox{\textwidth}{!}{
            \begin{tikzpicture}
    \pgfmathsetmacro{\CoreWidth}{2 * \VisibilityRange + 1}
    \pgfmathsetmacro{\RectSize}{2 * \VisibilityRange + \CoreWidth}

    % draw fill colored areas
    \begin{scope}[opacity=0.6]
        \foreach \i in {1, ..., \VisibilityRange} {
            \pgfmathsetmacro{\grayvalue}{\i / (\VisibilityRange + 5)}
            \definecolor{fillcolor}{gray}{\grayvalue}
            \begin{scope}[fill=fillcolor]
                \fill (0, \i - 1) rectangle (\RectSize, \i);
                \fill (0, \RectSize - \i) rectangle (\RectSize, \RectSize - \i + 1);
                \fill (\i - 1, 0) rectangle (\i, \RectSize);
                \fill (\RectSize - \i, 0) rectangle (\RectSize - \i + 1, \RectSize);
            \end{scope}
        }
        \fill[black!2] (0, \VisibilityRange) rectangle (\RectSize, \RectSize - \VisibilityRange);
        \fill[black!2] (\VisibilityRange, 0) rectangle (\RectSize - \VisibilityRange, \RectSize);
    \end{scope}

    % draw underlying grid
    \draw[opacity = 0.4, dashed] (0, 0) grid (\RectSize, \RectSize);
    % draw border lines
    \draw[line width=2pt] (0, 0) rectangle (\RectSize, \RectSize);
    \begin{scope}[line width = 1pt]
        \foreach \i in {1, ..., \VisibilityRange} {
            \draw (0, \i) -- (\RectSize, \i);
            \draw (0, \RectSize - \i) -- (\RectSize, \RectSize - \i);
            \draw (\i, 0) -- (\i, \RectSize);
            \draw (\RectSize - \i, 0) -- (\RectSize - \i, \RectSize);
        }
    \end{scope}

    % draw white indices in corners
    \begin{scope}[shift={(0.5, 0.5)}]
        \begin{scope}[white]
            \foreach \row in {1, ..., \VisibilityRange} {
                \foreach \col in {1, ..., \VisibilityRange} {
                    \pgfmathtruncatemacro{\rowstart}{(\row - 1) * (2 * \VisibilityRange + 1)}
                    \pgfmathtruncatemacro{\index}{\rowstart + \col}
                    \node at (\col - 1, 3 * \VisibilityRange + 1 + \VisibilityRange - \row) {$\index$};
    
                    \pgfmathtruncatemacro{\index}{\rowstart + \VisibilityRange + 1 + \col}
                    \node at (3 * \VisibilityRange + 1 + \col - 1, 3 * \VisibilityRange + 1 + \VisibilityRange - \row) {$\index$};
    
                    \pgfmathtruncatemacro{\rowstart}{(\VisibilityRange + \row) * (2 * \VisibilityRange + 1)}
                    \pgfmathtruncatemacro{\index}{\rowstart + \col}
                    \node at (\col - 1, \VisibilityRange - \row) {$\index$};
    
                    \pgfmathtruncatemacro{\index}{\rowstart + \VisibilityRange + 1 + \col}
                    \node at (3 * \VisibilityRange + 1 + \col - 1, \VisibilityRange - \row) {$\index$};
                }
            }
        \end{scope}
        \begin{scope}[black]
            \foreach \row in {1, ..., \VisibilityRange} {
                \pgfmathtruncatemacro{\rowstart}{(\row - 1) * (2 * \VisibilityRange + 1)}
                \pgfmathtruncatemacro{\index}{\rowstart + \VisibilityRange + 1}
                \node at (2 * \VisibilityRange, 3 * \VisibilityRange + 1 + \VisibilityRange - \row) {$\index$};

                \pgfmathtruncatemacro{\rowstart}{(\VisibilityRange + \row) * (2 * \VisibilityRange + 1)}
                \pgfmathtruncatemacro{\index}{\rowstart + \VisibilityRange + 1}
                \node at (2 * \VisibilityRange, \VisibilityRange - \row) {$\index$};
            }
            \pgfmathtruncatemacro{\rowstart}{(\VisibilityRange) * (2 * \VisibilityRange + 1)}
            \foreach \col in {1, ..., \VisibilityRange} {
                \pgfmathtruncatemacro{\index}{\rowstart + \col}
                \node at (\col - 1, 2 * \VisibilityRange) {$\index$};

                \pgfmathtruncatemacro{\index}{\rowstart + \VisibilityRange + 1 + \col}
                \node at (3 * \VisibilityRange + \col, 2 * \VisibilityRange) {$\index$};
            }            
            \pgfmathtruncatemacro{\index}{\rowstart + \VisibilityRange + 1}
            \node at (2 * \VisibilityRange, 2 * \VisibilityRange) {$\index$};
        \end{scope}
    \end{scope}
\end{tikzpicture}
        }
        \caption{$V = 2$}
    \end{subfigure}%
    \hfill
    \begin{subfigure}{0.25\textwidth}
        \centering
        \pgfmathsetmacro{\VisibilityRange}{3}
        \resizebox{0.99\textwidth}{!}{
            \begin{tikzpicture}
    \pgfmathsetmacro{\CoreWidth}{2 * \VisibilityRange + 1}
    \pgfmathsetmacro{\RectSize}{2 * \VisibilityRange + \CoreWidth}

    % draw fill colored areas
    \begin{scope}[opacity=0.6]
        \foreach \i in {1, ..., \VisibilityRange} {
            \pgfmathsetmacro{\grayvalue}{\i / (\VisibilityRange + 5)}
            \definecolor{fillcolor}{gray}{\grayvalue}
            \begin{scope}[fill=fillcolor]
                \fill (0, \i - 1) rectangle (\RectSize, \i);
                \fill (0, \RectSize - \i) rectangle (\RectSize, \RectSize - \i + 1);
                \fill (\i - 1, 0) rectangle (\i, \RectSize);
                \fill (\RectSize - \i, 0) rectangle (\RectSize - \i + 1, \RectSize);
            \end{scope}
        }
        \fill[black!2] (0, \VisibilityRange) rectangle (\RectSize, \RectSize - \VisibilityRange);
        \fill[black!2] (\VisibilityRange, 0) rectangle (\RectSize - \VisibilityRange, \RectSize);
    \end{scope}

    % draw underlying grid
    \draw[opacity = 0.4, dashed] (0, 0) grid (\RectSize, \RectSize);
    % draw border lines
    \draw[line width=2pt] (0, 0) rectangle (\RectSize, \RectSize);
    \begin{scope}[line width = 1pt]
        \foreach \i in {1, ..., \VisibilityRange} {
            \draw (0, \i) -- (\RectSize, \i);
            \draw (0, \RectSize - \i) -- (\RectSize, \RectSize - \i);
            \draw (\i, 0) -- (\i, \RectSize);
            \draw (\RectSize - \i, 0) -- (\RectSize - \i, \RectSize);
        }
    \end{scope}

    % draw white indices in corners
    \begin{scope}[shift={(0.5, 0.5)}]
        \begin{scope}[white]
            \foreach \row in {1, ..., \VisibilityRange} {
                \foreach \col in {1, ..., \VisibilityRange} {
                    \pgfmathtruncatemacro{\rowstart}{(\row - 1) * (2 * \VisibilityRange + 1)}
                    \pgfmathtruncatemacro{\index}{\rowstart + \col}
                    \node at (\col - 1, 3 * \VisibilityRange + 1 + \VisibilityRange - \row) {$\index$};
    
                    \pgfmathtruncatemacro{\index}{\rowstart + \VisibilityRange + 1 + \col}
                    \node at (3 * \VisibilityRange + 1 + \col - 1, 3 * \VisibilityRange + 1 + \VisibilityRange - \row) {$\index$};
    
                    \pgfmathtruncatemacro{\rowstart}{(\VisibilityRange + \row) * (2 * \VisibilityRange + 1)}
                    \pgfmathtruncatemacro{\index}{\rowstart + \col}
                    \node at (\col - 1, \VisibilityRange - \row) {$\index$};
    
                    \pgfmathtruncatemacro{\index}{\rowstart + \VisibilityRange + 1 + \col}
                    \node at (3 * \VisibilityRange + 1 + \col - 1, \VisibilityRange - \row) {$\index$};
                }
            }
        \end{scope}
        \begin{scope}[black]
            \foreach \row in {1, ..., \VisibilityRange} {
                \pgfmathtruncatemacro{\rowstart}{(\row - 1) * (2 * \VisibilityRange + 1)}
                \pgfmathtruncatemacro{\index}{\rowstart + \VisibilityRange + 1}
                \node at (2 * \VisibilityRange, 3 * \VisibilityRange + 1 + \VisibilityRange - \row) {$\index$};

                \pgfmathtruncatemacro{\rowstart}{(\VisibilityRange + \row) * (2 * \VisibilityRange + 1)}
                \pgfmathtruncatemacro{\index}{\rowstart + \VisibilityRange + 1}
                \node at (2 * \VisibilityRange, \VisibilityRange - \row) {$\index$};
            }
            \pgfmathtruncatemacro{\rowstart}{(\VisibilityRange) * (2 * \VisibilityRange + 1)}
            \foreach \col in {1, ..., \VisibilityRange} {
                \pgfmathtruncatemacro{\index}{\rowstart + \col}
                \node at (\col - 1, 2 * \VisibilityRange) {$\index$};

                \pgfmathtruncatemacro{\index}{\rowstart + \VisibilityRange + 1 + \col}
                \node at (3 * \VisibilityRange + \col, 2 * \VisibilityRange) {$\index$};
            }            
            \pgfmathtruncatemacro{\index}{\rowstart + \VisibilityRange + 1}
            \node at (2 * \VisibilityRange, 2 * \VisibilityRange) {$\index$};
        \end{scope}
    \end{scope}
\end{tikzpicture}
        }
        \caption{$V = 3$}
    \end{subfigure}
    \caption{The sensing regions on $2$-dimensional grid graphs given sensing range $V = 1$, $2$, or $3$. Regions are labelled from $1$ to $(2V + 1)^2$.}
    \label{fig:sensingregions2D}
\end{figure}
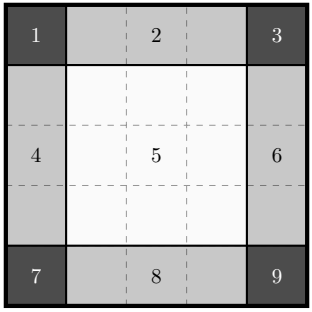
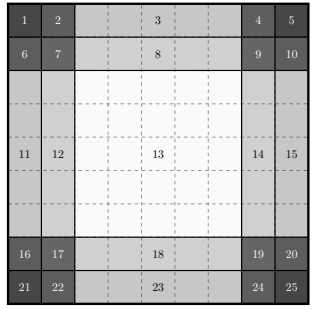
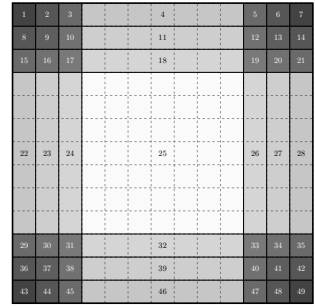

\begin{figure}
    \centering
    \begin{subfigure}{0.25\textwidth}
        \centering
        \pgfmathsetmacro{\VisibilityRange}{1}
        \resizebox{1\textwidth}{!}{
            \subimport{images/}{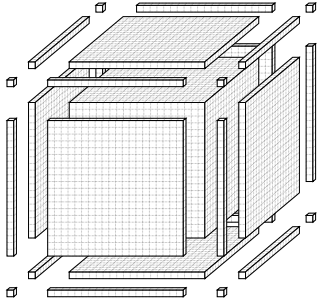}
        }
        \caption{$V = 1$}
    \end{subfigure}%
    \hfill
    \begin{subfigure}{0.25\textwidth}
        \centering
        \pgfmathsetmacro{\VisibilityRange}{2}
        \resizebox{\textwidth}{!}{
            \subimport{images/}{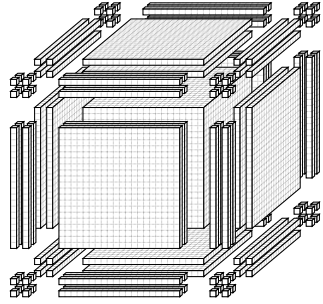}
        }
        \caption{$V = 2$}
    \end{subfigure}%
    \hfill
    \begin{subfigure}{0.25\textwidth}
        \centering
        \pgfmathsetmacro{\VisibilityRange}{3}
        \resizebox{0.99\textwidth}{!}{
            \subimport{images/}{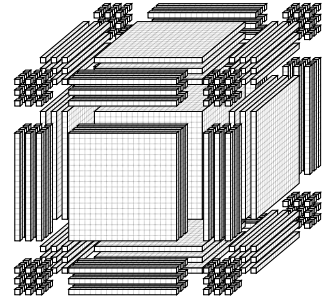}
        }
        \caption{$V = 3$}
    \end{subfigure}
    \caption{The sensing regions of $3$-dimensional grid graphs given sensing range $V = 1$, $2$, or $3$.}
    \label{fig:sensingregions3D}
\end{figure}

The sensing regions of 2- and 3-dimensional grid graphs are illustrated in \cref{fig:sensingregions2D} and \cref{fig:sensingregions3D}, respectively. We note two straightforward facts about sensing regions:

\begin{enumerate}[label=(\Alph*)]
    \item Let $\mathrm{ALG}$ be $\Robot$'s traversal algorithm. As long as $\Robot$ is located within a given sensing region $\tilde{\Position}$, $\mathrm{ALG}$ can only move $\Robot$ in one direction.
    \item If $\Dcube = [n_1] \times [n_2] \times \ldots [n_d]$, then $\SenseGraph{\Dcube}{V}$ is isomorphic to the $d$-dimensional grid  graph $[m_1] \times [m_2] \times \ldots \times [m_d]$ where $m_i =\min \big( {n_{i},2V+1} \big)$. 
\end{enumerate}

(A) follows from the fact that vertices in $\tilde{\Position}$ are indistinguishable to $\Robot$, and $\Robot$ has $0$ bits of memory, hence $\mathrm{ALG}$ must always make the same decision as long as $\Robot$ is in $\tilde{\Position}$. 

(B) is established by noting that two locations $p,p' \in \Dcube$ belong to the same sensing region if and only if $p$ and $p'$ have the same boundary distances $(l_1, r_1), \ldots (l_d, r_d)$ along each dimension (\cref{definition:projection_li_ri}). The set of possible boundary distances $(l_i, r_i)$ is of size $m_i$, because $\Robot$ can sense vertices at distance less than or equal to $V$ along dimension $i$. Each sensing region is connected to adjacent sensing regions in the same way grid vertices are (see Figures \ref{fig:sensingregions2D}, \ref{fig:sensingregions3D}), making  $\SenseGraph{\Dcube}{V}$  isomorphic to $[m_1] \times [m_2] \times \ldots [m_d]$.

We can now establish some facts about whether a given $d$-dimensional grid is patrollable:

\begin{lemma}
     Let $\Dcube = [n_1] \times [n_2] \times \ldots [n_d]$. If $\prod_{i=1}^{d} m_i$ (where $m_i =\min \big( {n_{i},2V+1} \big)$) is odd and greater than $1$, no algorithm exists that can patrol $\Dcube$ with $0$ bits of memory.

\label{lemma:nopatroloddnumberofsensingregions}
\end{lemma}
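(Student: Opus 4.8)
The plan is to assume, for contradiction, that some algorithm $\mathrm{ALG}$ patrols $\Dcube$ with $0$ bits of memory and sensing range $V$, and to extract from it a Hamiltonian cycle of $\SenseGraph{\Dcube}{V}$. This is impossible: by fact (B), $\SenseGraph{\Dcube}{V}$ is isomorphic to $[m_1] \times \cdots \times [m_d]$, a grid graph with $\prod_{i=1}^d m_i$ vertices — a number which by hypothesis is odd and greater than $1$ — so \cref{lemma:hamiltoniancyclerequiresevenproduct} forbids a Hamiltonian cycle. (Note that here each $m_i \ge 2$, since $\prod_i m_i > 1$ forces $V \ge 1$, and then $m_i = \min(n_i, 2V+1) \ge 2$ as $n_i \ge 2$; so $[m_1]\times\cdots\times[m_d]$ is a bona fide grid graph.)

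First I would use fact (A): while $\Robot$ sits in a sensing region $\tilde{\Position}$ it always moves in one fixed direction $\pm e_j$ depending only on $\tilde{\Position}$. Since each sensing region is a box — a product of coordinate intervals, by \cref{definition:projection_li_ri} — and motion in a fixed axis direction is monotone in one coordinate, $\Robot$ traverses a straight segment of $\tilde{\Position}$ and then exits into an adjacent sensing region. I would then argue that this exit region depends only on $\tilde{\Position}$, not on the vertex through which $\Robot$ entered $\tilde{\Position}$: the exit vertex has, in every coordinate other than $j$, the same boundary distances as all of $\tilde{\Position}$, and in coordinate $j$ it sits one step past the far end of $\tilde{\Position}$'s $j$-interval, a position determined by $\tilde{\Position}$ and the direction $\pm e_j$ alone. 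Denote this region $\mathrm{succ}(\tilde{\Position})$; it is distinct from $\tilde{\Position}$ and shares a grid edge with it, so $(\tilde{\Position}, \mathrm{succ}(\tilde{\Position}))$ is an edge of $\SenseGraph{\Dcube}{V}$.

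Next I would consider the directed graph $D$ on the set of sensing regions with edges $\tilde{\Position} \to \mathrm{succ}(\tilde{\Position})$. Every vertex of $D$ has out-degree exactly $1$. Moreover $D$ is strongly connected: starting $\Robot$ at any vertex of any region $\tilde{\Position}$, \cref{definition:patrolling} guarantees it eventually reaches a vertex of any target region $\tilde{\Position'}$, and the sequence of distinct consecutive regions visited along that trajectory is precisely a directed $\mathrm{succ}$-path from $\tilde{\Position}$ to $\tilde{\Position'}$. A strongly connected digraph in which every out-degree equals $1$ is a single directed cycle spanning all its vertices (from any cycle vertex one can only reach cycle vertices, so strong connectivity forces every vertex onto the cycle). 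Reading this cycle as a subgraph of $\SenseGraph{\Dcube}{V}$ yields a Hamiltonian cycle of $\SenseGraph{\Dcube}{V}$, completing the contradiction. One could equivalently phrase this via \cref{lemma:coveringispatrollingwhenV=0}: the induced cycle $\mathcal{C}_{\mathrm{ALG}}(v_1)$ is Hamiltonian in $\Dcube$, and projecting it onto sensing regions and collapsing consecutive repeats produces the Hamiltonian cycle of $\SenseGraph{\Dcube}{V}$.

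The step requiring the most care is showing $\mathrm{succ}(\tilde{\Position})$ is well-defined, i.e. that the region $\Robot$ lands in upon leaving $\tilde{\Position}$ is independent of the vertex through which it entered $\tilde{\Position}$; this is exactly where the box structure of sensing regions must be combined with the fixed-direction property (A). The remaining ingredients — out-degree $1$, strong connectivity, the single-cycle conclusion, and the appeal to \cref{lemma:hamiltoniancyclerequiresevenproduct} via fact (B) — are routine.
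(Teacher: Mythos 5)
Your proposal is correct and follows essentially the same route as the paper: both reduce the problem to exhibiting a Hamiltonian cycle of $\SenseGraph{\Dcube}{V}$ (the paper by projecting $\mathcal{C}_{\mathrm{ALG}}(v_1)$ onto sensing regions and collapsing consecutive repeats, you by defining the successor region directly and rerunning the out-degree-one/strong-connectivity argument of \cref{lemma:coveringispatrollingwhenV=0} at the region level) and then invoke fact (B) together with \cref{lemma:hamiltoniancyclerequiresevenproduct}. Your treatment of the well-definedness of $\mathrm{succ}(\tilde{\Position})$ simply makes explicit a step the paper leaves implicit in its appeal to fact (A).
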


\begin{proof}
    Suppose for contradiction that such an algorithm, $\mathrm{ALG}$, exists and let $\mathcal{C}_{\mathrm{ALG}}(v_1) = v_1 \ldots v_T v_{1}$ be the Hamiltonian cycle from $v_1$ induced by $\mathrm{ALG}$. Let $\tilde{\mathcal{C}}$ be the walk  $\tilde{v_1}\ldots \tilde{v_T} \tilde{v_{1}}$ in $\SenseGraph{\Dcube}{V}$ after deleting duplicates (that is, vertices $\tilde{v_i}$ for which $\tilde{v_i} = \tilde{v_{i-1}}$).  $\tilde{\mathcal{C}}$ is a cycle that traverses all of  $\SenseGraph{\Dcube}{V}$. In fact, $\tilde{\mathcal{C}}$ is a Hamiltonian cycle, since by (A), $\Robot$ cannot visit the same sensing region twice before visiting all others (by the same proof as \cref{lemma:coveringispatrollingwhenV=0}). But $\SenseGraph{\Dcube}{V}$ is isomorphic to $[m_1] \times [m_2] \times \ldots [m_d]$, contradicting  \cref{lemma:hamiltoniancyclerequiresevenproduct}.
\end{proof}

\begin{lemma}
     Let $\Dcube = [n_1] \times [n_2] \times \ldots [n_d]$. If there exist different $i,j$ such that  $n_i, n_j > 2V + 1$, no algorithm exists that can patrol $\Dcube$ with $0$ bits of memory.

\label{lemma:nopatrol2x2sensingsubregion}
\end{lemma}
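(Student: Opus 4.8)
The plan is to argue by contradiction: assuming some algorithm $\mathrm{ALG}$ patrols $\Dcube$ with $0$ bits of memory, I would exhibit a single sensing region that the induced Hamiltonian cycle is forced to traverse in a way that is geometrically impossible once two of the dimensions exceed $2V+1$.

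First I would isolate a suitably ``large'' sensing region. Since $n_i, n_j > 2V+1$, i.e.\ $n_i, n_j \geq 2V+2$, pick any vertex $q \in \Dcube$ with $q_i = q_j = V+1$ (such a vertex exists because $n_i, n_j > V+1$ and the remaining coordinates may be chosen arbitrarily). Writing $e_i$ for the standard basis vector along axis $i$, I would check, using the criterion behind fact (B) — two vertices lie in the same sensing region iff they have identical boundary distances $(l_1,r_1),\ldots,(l_d,r_d)$ — that $q$, $q+e_i$, $q+e_j$ and $q+e_i+e_j$ all have boundary distances $(V,V)$ along dimensions $i$ and $j$ (this is exactly where $n_i,n_j \geq 2V+2$ is used) and unchanged boundary distances along every other dimension, hence all four lie in one common sensing region $\tilde q$. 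In particular $\tilde q$ contains the three vertices $q$, $q+e_i$, $q+e_j$, which are \emph{not} collinear since $e_i$ and $e_j$ are linearly independent.

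Next I would bring in the structure of the induced traversal. By \cref{lemma:coveringispatrollingwhenV=0}, $\mathrm{ALG}$ induces a Hamiltonian cycle $\mathcal{C}_{\mathrm{ALG}}(v_1) = v_1 \ldots v_T v_1$ of $\Dcube$. By fact (A) there is a single direction $\delta = \pm e_k$ in which $\mathrm{ALG}$ moves $\Robot$ whenever $\Robot \in \tilde q$; hence along any maximal stretch of $\mathcal{C}_{\mathrm{ALG}}$ that stays inside $\tilde q$, consecutive vertices differ by $\delta$, so such a stretch is a collinear run $q', q'+\delta, q'+2\delta, \ldots$ inside the box $\tilde q$. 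Moreover — by exactly the argument in the proof of \cref{lemma:nopatroloddnumberofsensingregions}, namely that the contracted walk $\tilde{\mathcal C}$ on $\SenseGraph{\Dcube}{V}$ is a Hamiltonian cycle (same reasoning as \cref{lemma:coveringispatrollingwhenV=0}) — $\Robot$ cannot leave $\tilde q$ and return to it before the cycle closes, so the set of times $\{\,t : v_t \in \tilde q\,\}$ forms a single cyclic interval. Therefore all of $\tilde q$ is covered by one collinear run of $\mathcal{C}_{\mathrm{ALG}}$, contradicting the fact that $\tilde q$ contains the non-collinear vertices $q$, $q+e_i$, $q+e_j$. Hence no $0$-bit patrolling algorithm exists.

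The main obstacle is the middle step: establishing that $\Robot$ visits $\tilde q$ over one uninterrupted (cyclic) time interval. This does \emph{not} follow merely from $\mathcal{C}_{\mathrm{ALG}}$ being a cycle — a cycle could in principle enter $\tilde q$, leave, wander, and re-enter — and has to be extracted from the $0$-bit assumption together with fact (A) via the Hamiltonicity of the contracted walk, which is why I would lean on the argument already given for \cref{lemma:nopatroloddnumberofsensingregions} rather than re-deriving it. The rest (the choice of $q$, verifying the four vertices share a sensing region, and the collinearity observation) is routine.
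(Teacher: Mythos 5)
Your proposal is correct and follows essentially the same route as the paper's proof: both isolate the sensing region around a vertex with $i$-th and $j$-th coordinates equal to $V+1$, invoke the Hamiltonicity of the contracted cycle on $\SenseGraph{\Dcube}{V}$ (established in the proof of \cref{lemma:nopatroloddnumberofsensingregions}) to conclude that this region is visited in a single pass, and use fact (A) to conclude that pass is a straight-line run, which cannot cover the region. The only cosmetic differences are that you phrase the obstruction as three non-collinear vertices while the paper uses the full $2\times 2$ block $\{p_1,p_2,p_3,p_4\}$, and you allow the remaining coordinates to be arbitrary where the paper fixes them to $1$.
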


\begin{proof}
Suppose without loss of generality that $i=1, j=2$. We also suppose for contradiction that an algorithm $\mathrm{ALG}$ exists that patrols $\Dcube$ with $0$ bits of memory.

Since $n_1, n_2 > 2V+1$, the sensing region $\tilde{r} \in \SenseGraph{\Dcube}{V}$ that contains $(V+1,V+1,1,\ldots 1) \in \Dcube$ necessarily also contains  $(V+1,V+2,1,\ldots 1)$, $(V+2,V+1,1,\ldots 1)$ and $(V+2,V+2,1,\ldots 1)$. Let us denote these four vertices $p_1, p_2, p_3, p_4$ respectively. Let $\mathcal{C}_{\mathrm{ALG}}(v_1) = v_1 \ldots v_T v_{1}$ and  $\tilde{\mathcal{C}}$ be as in the proof of \cref{{lemma:nopatroloddnumberofsensingregions}}. Since $\tilde{\mathcal{C}}$ is a Hamiltonian cycle of $\SenseGraph{\Dcube}{V}$, we know that the Hamiltonian path $v_1 \ldots v_T$ of $\Dcube$ enters each sensing region exactly once. Furthermore, by (A), $\Robot$ can only move in one direction within a given sensing region. However, the sensing region $\tilde{r}$ contains the $2 \times 2$ subregion $\{p_1, p_2, p_3, p_4\}$, hence requires $\Robot$ to move in two directions (along the $x_1$ and $x_2$ axes) to visit all its vertices. Consequently, it cannot be fully visited by the path $v_1 \ldots v_T$ - contradiction. \end{proof}

Lemmas \ref{lemma:nopatroloddnumberofsensingregions} and \ref{lemma:nopatrol2x2sensingsubregion} establish that if the conditions of \cref{theorem:grid graphsthatcanbepatrolledwithVvisibility0bits} are unfulfilled, no algorithm exists that can patrol $\Dcube$ with $0$ bits of memory. It remains to show that if the conditions are fulfilled, such algorithms exist.

We proceed by explicitly describing such algorithms. Specifically, \cref{theorem:grid graphsthatcanbepatrolledwithVvisibility0bits} requires us to construct, for every $V$, an algorithm that patrols a grid graph $\Dcube$ satisfying:

\begin{enumerate}
    \item There is at most one index $i$ such that $n_i > 2V + 1$, and
    \item $\prod_{i=1}^{d} \min \big( {n_{i},2V+1} \big)$ is even or equals $1$.
\end{enumerate}

Note that this problem is not strictly equivalent to finding \textit{any} Hamiltonian cycle of the sensing region graph, because sensing regions may contain more than $1$ vertex, and an arbitrary Hamiltonian cycle may require the robot to move in several different directions inside the same sensing region. Hence, we need to identify a particular Hamiltonian cycle that works. 

We give two different algorithms for the cases $V=1$ and $V > 1$ (the algorithm for $V > 1$ constructs a different kind of Hamiltonian cycle and must be handled separately). Both algorithms assume, without loss of generality, that $n_1 \geq n_2 \geq \ldots n_d$.

\subsubsection*{$V=1$, 0-bit Patrolling Algorithm}

\cref{theorem:grid graphsthatcanbepatrolledwithVvisibility0bits} allows for exactly one of the dimensions of $\Dcube$ to be greater than $2V+1$. Hence, we assume w.l.o.g. that $n_2, n_3 \ldots n_d \leq 2V+1$ (so $x_1$ is the unique dimension that is allowed to be ``large'').

\SetKwFunction{Parity}{Parity}
\SetKwFunction{movedown}{down}
Our algorithm for the $V=1$ case, \cref{alg:V=1 b=0-Patrolling Algorithm}, is straightforward. Its pseudocode returns a $step$ vector based on $\Robot$'s sensing data, which $\Robot$ adds to its current location's coordinates so as to determine its next location. It works by patrolling $\Dcube$ in a ``zig-zag'' fashion, moving in a fixed direction along the $x_1$ axis until it hits a boundary, and subsequently flipping direction and incrementing or decrementing $x_2$; moving in a fixed direction along $x_2$ until it hits a boundary and subsequently flipping direction and incrementing or decrementing $x_3$; and so on. A subroutine called $Parity$ (\cref{alg:parity}) is used to keep track of whether $\Robot$ needs to increment or decrement its coordinate along a given axis via a variable called  $\movedown$. The behavior of \cref{alg:V=1 b=0-Patrolling Algorithm} over $\Dcube = [5] \times [3] \times [3] \times [2]$ is illustrated in \cref{fig:visualization.v=1.m=0}.  

\begin{algorithm}
 %   \SetKwFunction{Parity}{Parity}
    
    \SetKwProg{Fn}{Function}{}{}
    \DontPrintSemicolon
    
    \Fn{\Parity}{
    \KwIn{$V ,n, (\textit{l}, \textit{r})$}
        \uIf{\textit{l} < $V$} {
            \Return $\textit{l} \mod 2$\;
        }
        \Return $n - \textit{r} + 1 \mod 2$\;
    }
    
\caption{The sub-routine ``\textbf{Parity}'' used in \cref{alg:V=1 b=0-Patrolling Algorithm}.}
\label{alg:parity}	
\end{algorithm}

\begin{algorithm}
    \SetKwFunction{step}{step}
  %  \SetKwFunction{movedown}{down}
    \SetKwFunction{Parity}{Parity}
    \SetKwFunction{MakeMove}{MakeMoveMemoryless}

    \SetKw{And}{and}
    \SetKwProg{Fn}{Function}{}{}    
    \DontPrintSemicolon

    \BlankLine

    \Fn{\MakeMove}{
            \KwIn{$V = 1$ - agent sensing range.}
       \KwIn{$n_1, \ldots, n_d$ - the dimensions of $\Dcube$}
        \KwIn{$\{(\textit{l}_1, \textit{r}_1), \ldots (\textit{l}_d, \textit{r}_d)\}$ - see \cref{definition:projection_li_ri}}
      %  $\movedown = \sum\limits_{i=1}^{d} \Parity(V, n_i, (\textit{l}_i, \textit{r}_i)) \mod 2$%\tcp*{where summation is in $\mathbb{Z}_2$}
        \BlankLine
        $\step = (0, \ldots, 0)$\tcp*{$d$ - dimensional zero vector}
        \BlankLine
        \For{$j = 1$ \KwTo $d$} {
            $\movedown = \sum\limits_{i > j}^{d} \Parity(V, n_i, (\textit{l}_i, \textit{r}_i)) \mod 2$\tcp*{$0$ when $j = d$}
        
        %    $\movedown\enspace+\negmedspace=\enspace \Parity(V, n_i, (\textit{l}_i, \textit{r}_i))  \mod 2$%\tcp*{where summation is in $\mathbb{Z}_2$}
            \BlankLine
            \If{$\movedown = 0$ \And $\textit{r}_j > 0$}{
                $\step[j] = 1$\tcp*{move up in the $j$th dimension}
                \Return \step\;
            }
            \BlankLine
            \If{$\movedown = 1$ \And $\textit{l}_j > 0$}{
                $\step[j] = -1$\tcp*{move down in the $j$th dimension}
                \Return \step\;
            }
        }
        \BlankLine
        $\step[d] = -1$\tcp*{move down in the last dimension}
        \Return \step\;
    }
\caption{A memoryless, $V=1$ sensing range patrolling algorithm for $d$-dimensional grids that adhere to the constraints of \cref{theorem:grid graphsthatcanbepatrolledwithVvisibility0bits}.}
\label{alg:V=1 b=0-Patrolling Algorithm}
\end{algorithm}

\begin{figure}
    \centering
        \centering
        \resizebox{0.8\textwidth}{!}{
            % a bit dirty
            % in case the cache was cleared
            % go and build the relevant tex [let's hope the name is unchanged]
            % if the build failed, create visualization.v=1.m=0.md5 inside the intermediate folder
            % note, that \usepackage{tikz} should be used instead of document class argument
            % and rebuild
            \includegraphics[page=1]{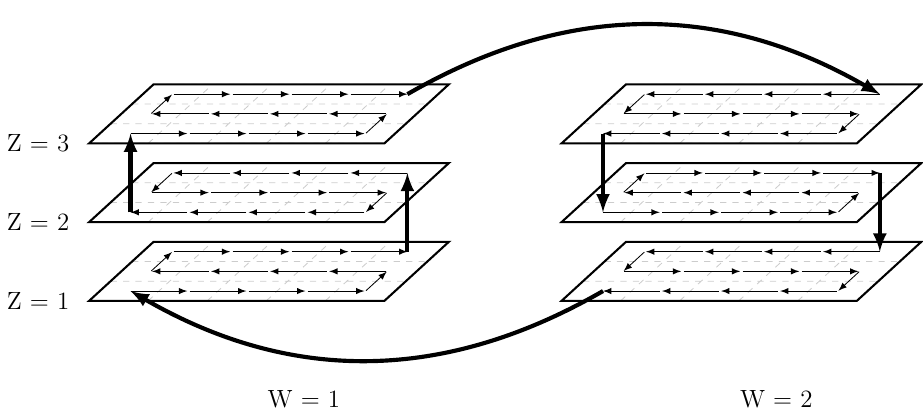}
        }
    \caption{The behavior of \cref{alg:V=1 b=0-Patrolling Algorithm} over $\Dcube = [5] \times [3] \times [3] \times [2]$. Arrows denote the next step of $\Robot$ from a given location.}
    \label{fig:visualization.v=1.m=0}
\end{figure}

Assuming \cref{theorem:grid graphsthatcanbepatrolledwithVvisibility0bits}'s conditions are fulfilled, \cref{alg:V=1 b=0-Patrolling Algorithm} finds a Hamiltonian path of $\Dcube$, and does not violate any of the assumptions in the problem setting:

\begin{itemize}
    \item \textbf{No memory or sensing range constraints are violated:}  As can be seen from the pseudocode, the algorithm only receives as inputs the boundary distances of $\Robot$ and the dimensions of $\Dcube$, using no internal memory. %Since we assume $n_2, n_3, \ldots, n_d \leq 2V+1$, $\Robot$ can always infer its exact position along the $x_2, x_3 \ldots x_d$ axes - it is only unaware of its position along the $x_1$ axis. Hence the parity calculation $\movedown$, which only utilizes $\Robot$'s position along non-$x_1$ axes, is allowed.  

    \item \textbf{The algorithm successfully patrols $\Dcube$:} When $V=1$, \cref{theorem:grid graphsthatcanbepatrolledwithVvisibility0bits}'s conditions imply some dimension $n_i$ of $\Dcube$ equals exactly $2$. Due to the parity bit $\movedown$, the path of $\Robot$ is reversed when $x_d = 2$ compared to when $x_d = 1$ (see \cref{fig:visualization.v=1.m=0}). Hence, \cref{alg:V=1 b=0-Patrolling Algorithm} alternates between two opposite Hamiltonian walks, one covering the vertices of $\Dcube$ which have $x_d = 1$ and one covering vertices which have $x_d = 2$. Stitching these walks together forms a Hamiltonian cycle of $\Dcube$.
\end{itemize}

\subsubsection*{$V > 1$, 0-bit Patrolling Algorithm}

Our algorithm for the $V > 1$ case, \cref{alg:Vmorethan1 m=0-Patrolling Algorithm}, needs to patrol a greater variety of grid graphs than the algorithm we gave for the $V = 1$ case, since the conditions asserted by \cref{theorem:grid graphsthatcanbepatrolledwithVvisibility0bits} depend on $V$. As before, we assume w.l.o.g. that $n_2, n_3 \ldots n_d \leq 2V+1$. 

\begin{algorithm}[!ht]
    
    \SetKwFunction{Parity}{Parity}
    \SetKwFunction{step}{step}
    \SetKwFunction{movedown}{down}
    \SetKwFunction{moveup}{up}
    %\SetKwFunction{MakeMove}{MakeMove}

    \SetKw{And}{and}
    \SetKw{Continue}{continue}

    \DontPrintSemicolon
 %   \BlankLine
    \SetKwProg{Fn}{Function}{}{}    
    \Fn{\MakeMove}{
    \KwIn{As in \cref{alg:V=1 b=0-Patrolling Algorithm}}
    %     \KwIn{$V$ - agent sensing range.}
    % \KwIn{$n_1, \ldots, n_d\text{, where }n_i \in \mathbb{N}$.}
    % \KwIn{$\{(\textit{l}_1, \textit{r}_1), \ldots (\textit{l}_d, \textit{r}_d)\}$.
%}
        \BlankLine
        $\step = (0, \ldots, 0)$ \tcp*{$d$ - dimensional zero vector}
    %    \BlankLine
        \uIf(\tcp*[h]{return back on the $x_1 = 1$ strip}){$\textit{l}_1 = 0$}{
            \For{$j = 2$ \KwTo $d$} {
                $\moveup = \sum\limits_{i > j}^{d} \Parity(V, n_i, (\textit{l}_i, \textit{r}_i)) \mod 2$\tcp*{$0$ when $j = d$}
         %       \BlankLine
                \uIf{$\moveup = 0$ \And $\textit{l}_j > 0$}{
                    $\step[j] = -1$\;
                    \Return \step\;
                }
         %      \BlankLine
                \uIf{$\moveup = 1$ \And $\textit{r}_j > 0$}{
                    $\step[j] = 1$\;
                    \Return \step\;
                }
            }
         %   \BlankLine
            $\step[1] = 1$%\tcp*{leave $x_1 = 1$ strip at $\bar{1}$}
        }
        \uElse(\tcp*[h]{execute a zig-zag motion}){
            \For{$j = 1$ \KwTo $d$} {
                $\movedown = \sum\limits_{i > j}^{d} \Parity(V, n_i, (\textit{l}_i, \textit{r}_i)) \mod 2$\tcp*{$0$ when $j = d$}
              %  \BlankLine
                \uIf{$\movedown = 0$ \And $\textit{r}_j > 0$}{
                    $\step[j] = 1$\;
                    \Return \step\;
                }
           %     \BlankLine
                \uIf{$\movedown = 1$ \And $\textit{l}_j > 0$}{
                    \uIf(\tcp*[f]{skip $x_1 = 1$ strip}){j = 1 \And $\textit{l}_1 = 1$}{
                        \Continue\;
                    }
                    $\step[j] = -1$\;
                    \Return \step\;
                }
            }
        %   \BlankLine
            $\step[1] = -1$%\tcp*{enter $x_1 = 1$ strip and return to $\bar{1}$}
            
        }
       \Return \step\;
    }
    
\caption{A memoryless, $V>1$ sensing range patrolling algorithm for $d$-dimensional grids that adhere to the constraints of \cref{theorem:grid graphsthatcanbepatrolledwithVvisibility0bits}.}
\label{alg:Vmorethan1 m=0-Patrolling Algorithm}	
\end{algorithm}

Similar to \cref{alg:V=1 b=0-Patrolling Algorithm}, it can be seen from the pseudocode that \cref{alg:Vmorethan1 m=0-Patrolling Algorithm} is memoryless and does not violate the sensing constraints of $\Robot$. \cref{alg:Vmorethan1 m=0-Patrolling Algorithm} works by partitioning $\Dcube$ into two disjoint sub-graphs, $\Dcube = \Dcube_1 \cup \Dcube_2$. $\Dcube_1$ contains all vertices in $\Dcube$ whose 1st coordinate is $1$, i.e., all vertices of the form $(1, *, *, \ldots *)$. $\Dcube_2$ contains all vertices whose 1st coordinate is greater than $1$, i.e., $\Dcube_2 = \Dcube \setminus \Dcube_1$. The behavior of \cref{alg:Vmorethan1 m=0-Patrolling Algorithm} is illustrated by \cref{fig:visualization.v>1.m=0}. As seen in the Figure, the Hamiltonian cycle constructed by \cref{alg:Vmorethan1 m=0-Patrolling Algorithm} is split into (i) a Hamiltonian path of $\Dcube_2$ and (ii) a Hamiltonian path of $\Dcube_1$. The Hamiltonian path of $\Dcube_2$ goes in a zig-zag, similar to \cref{alg:V=1 b=0-Patrolling Algorithm}, either increasing or decreasing $x_1$ until a coordinate of the form $(2,*, \ldots *)$ or $(n_1,*, \ldots *)$ is met (this is why \cref{alg:Vmorethan1 m=0-Patrolling Algorithm} requires $V > 1$  - otherwise we could not uniquely distinguish coordinates of the form $(2,*, \ldots *)$), taking a step along some non-$x_1$ coordinate, and then going in reverse along the $x_1$ axis. The conditions of \cref{theorem:grid graphsthatcanbepatrolledwithVvisibility0bits} imply $ \prod_{i=2}^{d} n_i $ is even when $d > 1$. This implies that the Hamiltonian path of $\Dcube_2$ shall end up adjacent to the start of $\Dcube_1$'s Hamiltonian path. Hence the two paths can be joined to form a Hamiltonian cycle of $\Dcube$.

\begin{figure}
    \centering
        \centering
        \resizebox{0.8\textwidth}{!}{
            \includegraphics[page=1]{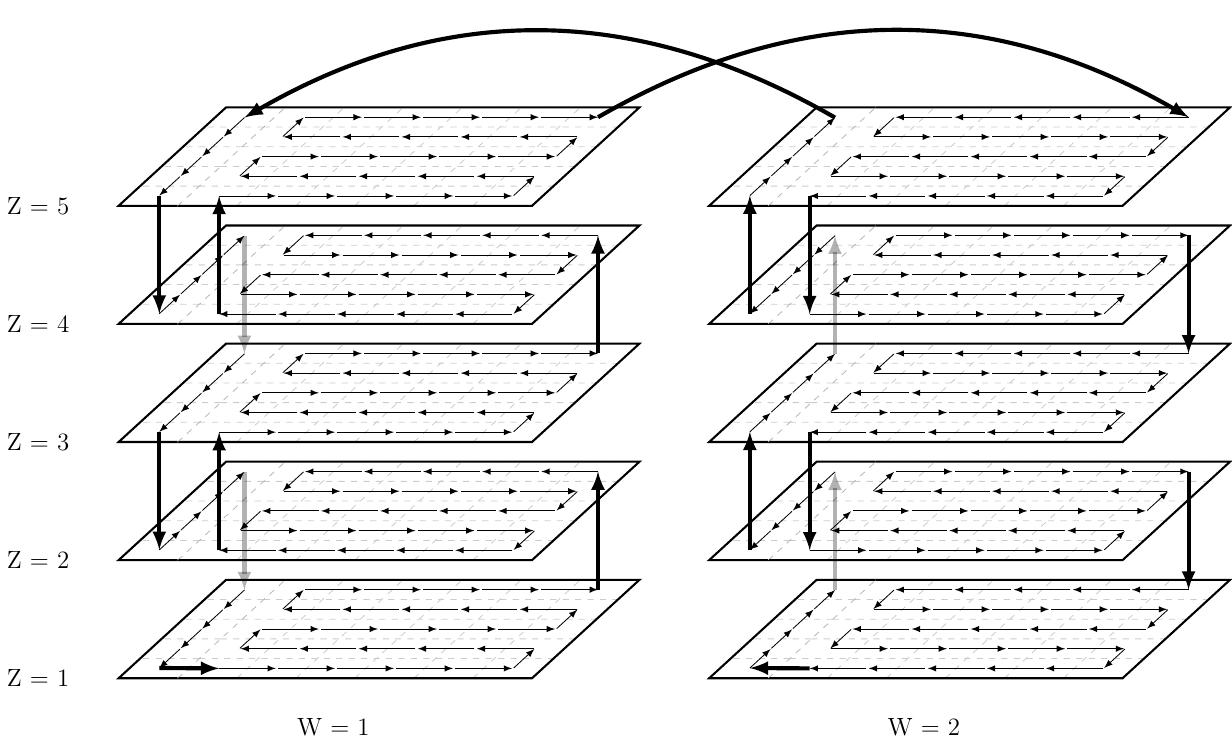}
        }
    \caption{The behavior of \cref{alg:Vmorethan1 m=0-Patrolling Algorithm} over $\Dcube = [7] \times [5] \times [5] \times [2]$.}
    \label{fig:visualization.v>1.m=0}
\end{figure}

\cref{alg:V=1 b=0-Patrolling Algorithm} and \cref{alg:Vmorethan1 m=0-Patrolling Algorithm}, together with Lemmas \ref{lemma:nopatroloddnumberofsensingregions} and \ref{lemma:nopatrol2x2sensingsubregion}, establish \cref{theorem:grid graphsthatcanbepatrolledwithVvisibility0bits}. \qed 

\section{Patrolling with \texorpdfstring{$1$}{1} Bit of Memory}
\label{section:patrol1bit}

\cref{section:patrol0bit} established that no $0$-bit algorithm can patrol all grid graphs no matter how large $\Robot$'s sensing range is. In contrast, in this section we give a $1$-bit, $V = 1$ algorithm that patrols all $d$-dimensional grid graphs. Additionally, unlike Algorithms \ref{alg:V=1 b=0-Patrolling Algorithm} and \ref{alg:Vmorethan1 m=0-Patrolling Algorithm}, we shall describe an algorithm that does not depend on the  dimension sizes $(n_1, \ldots n_d)$ of $\Dcube$, thus $\Robot$ does not need to know these in advance. 

Let us denote the current memory state of $\Robot$ ``$mem$''. Since we assume $\Robot$ possesses $1$ bit of memory, $mem$ is either $0$ or $1$.  The primary challenge we face  is that, with just $1$ bit of memory, each sensing region of $\Dcube$ can be exited in only $2$ directions - one direction when $mem = 0$ and one direction when $mem = 1$. However, patrolling a $d$-dimensional grid requires us to move in $2d$ distinct directions (as we need to be able to increase and decrease each of our $d$ coordinates). Consequently, we must somehow ``diffuse'' our directions of motion across the sensing regions of $\Dcube$ in a manner such that the in- and out- degree of each sensing region not exceed $2$, but $\Robot$ nevertheless manages to visit every vertex in $\Dcube$.

The number of sensing regions grows exponentially with $d$: it is between $2^d$ and $(2V+1)^d$ for a given $V$. Since $\Robot$'s algorithm can act differently in each sensing region, in some sense we have more ``leeway'' in higher dimensions. In our algorithm, we find that low-dimensional grid graphs require their own individual patrolling strategies, but that the patrolling strategy for higher dimensional graphs can be defined using induction, by calling these lower-dimensional patrolling algorithms as subroutines. Specifically,  our algorithm handles $1$-, $2$-, and $3$-dimensional grid graphs individually, but extends inductively to all grid graphs of dimension $4$ and above, using the algorithms for $2$- and $3$-dimensional graphs as building blocks.

\SetKwFunction{MakeMoveOneD}{MakeMove1D}
\SetKwFunction{MakeMoveTwoD}{MakeMove2D}
\SetKwFunction{MakeMoveThreeD}{MakeMove3D}
\SetKwFunction{MakeMovekplusoneD}{MakeMove(k+1)D}
\SetKwFunction{MakeMovekD}{MakeMove(k)D}
\SetKwFunction{MakeMoveiD}{MakeMove(i)D}
\SetKwFunction{MakeMovejD}{MakeMove(j)D}

\cref{subfig:visualization.m=1.1d} and \cref{subfig:visualization.m=1.2d} visually illustrate algorithms for patrolling a $1D$ and $2D$ grid graphs using $1$ bit of memory. We shall refer to these algorithms  as \MakeMoveOneD and \MakeMoveTwoD respectively. Pseudocode for  \MakeMoveOneD  and  \MakeMoveTwoD is available  - see \cref{alg:V=1 b=1-Patrolling Algorithm 1D} and \cref{alg:V=1 b=1-Patrolling Algorithm 2D}. We note that the pseudocode for these algorithms is written such that it can be run on any $d$-dimensional grid graph. This is deliberate, as we use lower-dimensional patrolling algorithms as sub-routines in our higher-dimensional patrolling algorithms. Running \MakeMoveOneD  and  \MakeMoveTwoD on higher-dimensional grid graphs will result in $\Robot$ patrolling a 1- or 2-dimensional sub-grid of $\Dcube$, respectively.

\begin{figure}[ht]
    \centering
    \begin{subfigure}{0.38\textwidth}
        \centering
        \resizebox{\textwidth}{!}{
            \includegraphics[page=1]{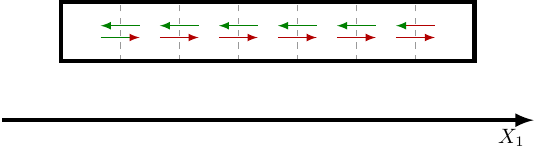}
        }
        \caption{}
        \label{subfig:visualization.m=1.1d}
    \end{subfigure}
    \hfill
    \begin{subfigure}{0.38\textwidth}
        \centering
        \resizebox{\textwidth}{!}{
            \includegraphics[page=1]{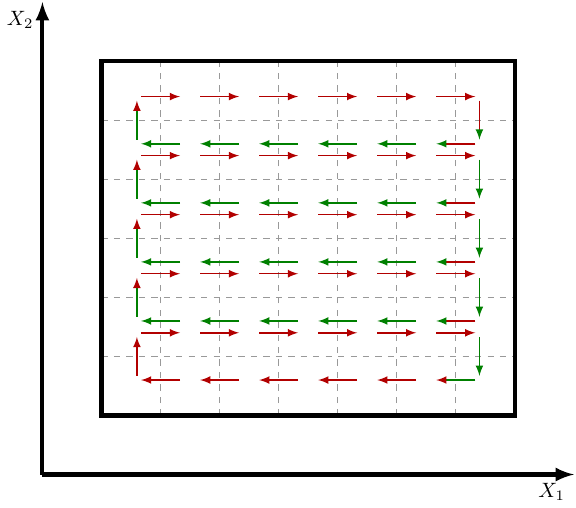}
        }
        \caption{}
        \label{subfig:visualization.m=1.2d}
    \end{subfigure}
    \caption{A visual illustration of patrolling 1D and 2D grid graphs with $1$ bit of memory using (a) \protect\MakeMoveOneD and (b) \protect\MakeMoveTwoD. Each arrow's body and head take on one of two colors: \textcolor{darkred}{red} or \textcolor{darkgreen}{green}. 
    When $\Robot$'s bit of memory is set to $0$, it searches for an arrow with a \textcolor{darkred}{red} body at its current location, and moves to where it points. Likewise, when $\Robot$'s bit of memory is set to $1$, it searches for an arrow with a \textcolor{darkgreen}{green} body, and moves to where it points. The color of the head of the arrow determines what $\Robot$ should set its bit of memory to after moving: $0$ if \textcolor{darkred}{red}, $1$ if  \textcolor{darkgreen}{green}. }
    \label{fig:visualization.m=1.1-2d}
\end{figure}

\begin{figure}[!ht]
    \centering
        \centering
        \resizebox{0.7\textwidth}{!}{
            \includegraphics[page=1]{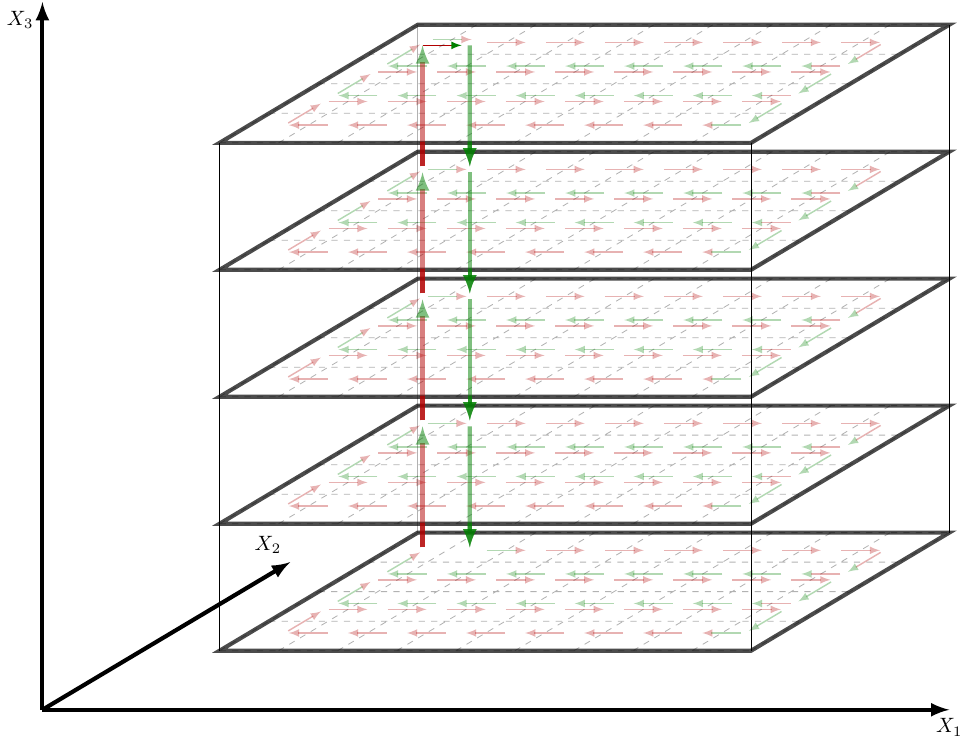}
        }
    \caption{Patrolling a 3D grid graph with $1$ bit of memory using \protect\MakeMoveThreeD. To be read the same way as \cref{fig:visualization.m=1.1-2d}. }
    \label{fig:visualization.m=1.3d}
\end{figure}

\begin{algorithm}[ht]
    
    \SetKwFunction{step}{step}
    \SetKwFunction{MakeMove}{MakeMove1D}

    \SetKw{And}{and}
    \SetKw{Not}{not}
    \SetKw{Or}{or}
    \SetKw{Continue}{continue}

    \SetArgSty{textrm}
    
    \BlankLine
    \SetKwProg{Fn}{Function}{}{}    
    \DontPrintSemicolon
    \Fn{\MakeMove}{
    \KwIn{$(\textit{l}_1, \textit{r}_1), mem \in \{0, 1\}$.}
        \BlankLine       
        $\step = (0, \ldots, 0)$ \tcp*{$d$ - dimensional zero vector}

        \BlankLine       
        \uIf{$mem=0$} {
        \BlankLine       
            \uIf{$r_1 \neq 0$}{
                $step[1] = 1$ \tcp*{move right}
            }
            \uElse{
                $step[1] = -1$ and $mem = 1$ \tcp*{change direction}
            }
        }
        \uElse{
            \uIf{$l_1 \neq 0$}{
                $step[1] = -1$ \tcp*{move left}
            }
            \uElse{
                $step[1] = 1$ and $mem = 0$ \tcp*{change direction}
            }
        }

        \BlankLine       
        \Return \step, $mem$
    }
\caption{A $1$-dimensional grid graph patrolling algorithm, using sensing range $V=1$  and $1$ bit of internal memory.}
\label{alg:V=1 b=1-Patrolling Algorithm 1D}	
\end{algorithm}

\begin{algorithm}[!bht]
    
    \SetKwFunction{step}{step}
    \SetKwFunction{MakeMove}{MakeMove2D}

    \SetKw{And}{and}
    \SetKw{Not}{not}
    \SetKw{Or}{or}
    \SetKw{Continue}{continue}

    \SetArgSty{textrm}
    
    \BlankLine
    \SetKwProg{Fn}{Function}{}{}    
    \DontPrintSemicolon
    \Fn{\MakeMove}{
    \KwIn{$\{(\textit{l}_1, \textit{r}_1), (\textit{l}_2, \textit{r}_2)\}$, $mem \in \{0, 1\}$.}
        \BlankLine       
        $\step = (0, \ldots, 0)$ \tcp*{$d$ - dimensional zero vector}
        \BlankLine
        \uIf{$mem=0$} {
            \lIf{$r_1 \neq 0\;\And\;l_2 \neq 0$} {
                $\step[1] = 1$
            }
            \lIf{$l_1 \neq 0\;\And\;l_2 = 0$} {
                $\step[1] = -1$
            }
            \lIf{$l_1 = 0\;\And\;l_2 = 0$} {
                $\step[2] = 1$
            }
            \lIf{$r_1 = 0\;\And\;l_2 \cdot r_2 \neq 0$} {
                $\step[1] = -1$ and 
                $mem = 1$
            }
            \lIf{$r_1 = 0\;\And\;r_2 = 0$} {
                $\step[2] = -1$ and
                $mem = 1$
            }
        }
        \BlankLine
        \uElse{
            \lIf{$l_1 = 0\;\And\;r_2 \neq 0$} {
                $\step[2] = 1$ and
                $mem = 0$
            }
            \lIf{$l_1 = 0\;\And\;r_2 = 0$} {
                $\step[1] = 1$ and
                $mem = 0$
            }
            \lIf{$l_1 \cdot r_1 \neq 0\;\And\;r_2 \neq 0$} {
                $\step[1] = -1$
            }
            \lIf{$l_1 \cdot r_1 \neq 0\;\And\;r_2 = 0$} {
                $\step[1] = 1$
            }
            \lIf{$r_1 = 0\;\And\;l_2 = 0$} {
                $\step[1] = -1$ and
                $mem = 0$
            }
            \lIf{$r_1 = 0\;\And\;l_2 \neq 0$} {
                $\step[2] = -1$
            }                
        }

        \BlankLine
        \Return \step, $mem$\;
    }

% define OnLeft(i) := l_i == 0
% define OnRight(i) := r_i == 0
% define Interior(i) := not OnLeft(i) and not OnRight(i)
\caption{A $2$-dimensional grid graph patrolling algorithm, using sensing range $V=1$  and $1$ bit of internal memory.}
\label{alg:V=1 b=1-Patrolling Algorithm 2D}	
\end{algorithm}

We note that the visual descriptions of \MakeMoveOneD and \MakeMoveTwoD given in \cref{subfig:visualization.m=1.2d} are not fully determined: some vertices have only one outgoing arrow. For example, in \cref{subfig:visualization.m=1.2d}, if $\Robot$ is located at $(1,1)$ with $mem = 1$, its next move is undefined. Such under-determined states can easily be handled - and we handle them explicitly in  the pseudocode of \MakeMoveTwoD given in \cref{alg:V=1 b=1-Patrolling Algorithm 2D} -  but we deliberately do not depict them in \cref{fig:visualization.m=1.1-2d} because they are ``transient'': $\Robot$ never returns to these states after exiting them.  

Because $\Robot$ has just $1$ bit of memory, there can be at most two outgoing arrows per vertex $v \in \Dcube$ in \cref{subfig:visualization.m=1.2d}, thus transient states give us important ``degrees of freedom'' when we want to extend our patrolling algorithms to higher dimensions. For example, in our algorithm for patrolling 3-dimensional grid graphs, \MakeMoveThreeD, we turn some transient states into non-transient states that send $\Robot$ up and down the $x_3$ axis. To explain how \MakeMoveThreeD works we make use of ``floors'':

\begin{definition}
\label{definition:floorceiling}

Let $\Dcube$ be a $d$-dimensional grid graph. A \textbf{$k$-floor of $\Dcube$} with coordinates $(q_{k+1}, q_{k+2}, \ldots, q_d)$ is the sub-graph of $\Dcube$ defined by:

\begin{equation*}
F_k(q_{k+1}, q_{k+2}, \ldots, q_d) = \Set{(x_1,\ldots,x_k,q_{k+1},\ldots, q_d) \in \Dcube}{(x_1,\ldots,x_k) \in [n_1] \times [n_2] \times \ldots [n_k]}
\end{equation*}

%Similarly, a \textbf{$k$-ceiling of $\Dcube$} is a sub-graph of $\Dcube$ parametrized by $(q_{1}, q_{2}, \ldots q_k)$:

% \begin{equation*}
% C_k(q_{1}, , \ldots q_k) = \Set{(q_1,\ldots,q_k,x_{k+1},\ldots x_d) \in \Dcube}{(x_{k+1},\ldots x_d) \in [n_{k+1}] \times \ldots [n_d]}
% \end{equation*}

\end{definition}

We introduce the notation $(q_1,q_2,\ldots, q_d|m)$ to refer to the state of $\Robot$ where $x_i = q_i$ and $mem = m$. We shall also write $(q_1,\ldots,q_k,*|m)$ to refer to the set of states where $x_1 = q_1, \ldots x_k = q_k$, but $x_{k+1}, \ldots, x_{d}$ can take on any value. Finally, we use the notation $\neg q$ to say ``any value except q''. For example, $(\neg q_1, q_2, \ldots, q_d|m)$ refers to a set of states where $mem = m$, $x_1 \neq q_1$, and $x_2 = q_2, \ldots, x_d = q_d$.

\MakeMoveThreeD works by moving $\Robot$ according to \MakeMoveTwoD in every $2$-floor of $\Dcube$, but it moves $\Robot$ up and down $x_3$ whenever $\Robot$ reaches the states $(1, n_2, \neg n_3,*|0)$ and  $(\neg 1,n_2,\neg 1,*|1)$, respectively. It also reserves the state $(1,n_2,n_3,*|0)$ for transitioning between the latter two states. Pseudocode and diagrams for \MakeMoveThreeD are available  - see \cref{alg:V=1 b=1-Patrolling Algorithm 3D} and \cref{fig:visualization.m=1.3d}. 

The idea behind \MakeMoveThreeD seems natural, as it simply runs \MakeMoveTwoD until it finishes patrolling a $2$-floor and then goes to a different $2$-floor. However, because we have just $1$ bit of memory, there is more nuance to this than might be suspected at a glance: the states $(1, n_2, \neg n_3,*|0)$ and  $(\neg 1,n_2,\neg 1,*|1)$ were specifically chosen because $\Robot$ can transition between them in a single move, and because $(\neg 1,n_2|1)$ is a transient state of \MakeMoveTwoD that can be repurposed to enable $\Robot$ to move in the $x_3$ axis. Likewise, to use \MakeMoveThreeD as a subroutine in our higher-dimensional patrolling algorithm, \MakeMovekD, we must find other transient states that do not interfere with the ones used by \MakeMoveThreeD. Hence, there is \textit{a small economy of transient states} that must be exploited; it took the authors some time to find a construction that works.

\begin{algorithm}[ht]
    
    \SetKwFunction{step}{step}
    \SetKwFunction{MakeMove}{MakeMove3D}
    \SetKwFunction{MakeMoveTwoD}{MakeMove2D}

    \SetKw{And}{and}
    \SetKw{Not}{not}
    \SetKw{Or}{or}
    \SetKw{Continue}{continue}

    \SetArgSty{textrm}
    
    \BlankLine
    \SetKwProg{Fn}{Function}{}{}    
    \DontPrintSemicolon
    \Fn{\MakeMove}{
    \KwIn{$\{(\textit{l}_1, \textit{r}_1), (\textit{l}_2, \textit{r}_2), (\textit{l}_3, \textit{r}_3)\}$, $mem \in \{0, 1\}$.}
        \BlankLine       
        $\step = (0, \ldots, 0)$ \tcp*{$d$ - dimensional zero vector}

        \BlankLine       
        \uIf{$l_1=0\;\And\;r_2=0\;\And\;mem=0$} {
        \BlankLine       
            \uIf{$r_3 = 0$}{
                $\step[1] = 1$ \tcp*{special state that occurs when $x_3$ is maximized}
                \label{Algo3D:2nd-case}
            }
            \uElse{
                $\step[3] = 1$ \tcp*{move up $x_3$}
                \label{Algo3D:1st-case}
            }
            $mem = 1$
            
            \Return \step, $mem$
        }

        \BlankLine       
        \uIf{$l_1, l_3 \neq 0\;\And\;r_2=0\;\And\;mem=1$} {
            $step[3] = -1$ \tcp*{move down $x_3$}
            \label{Algo3D:3rd-case}
            \Return \step, $mem$
        }

        \BlankLine       
        \Return \MakeMoveTwoD($\{(\textit{l}_1, \textit{r}_1), (\textit{l}_2, \textit{r}_2)\}, \textit{mem}$)
    }
\caption{A $3$-dimensional grid graph patrolling algorithm, using sensing range $V=1$  and $1$ bit of internal memory.}
\label{alg:V=1 b=1-Patrolling Algorithm 3D}	
\end{algorithm}

The correctness of \MakeMoveOneD, \MakeMoveTwoD and \MakeMoveThreeD is asserted by \cref{lemma:makemoveonetwothreeD_succeed}:

\begin{lemma}
\label{lemma:makemoveonetwothreeD_succeed}

Let $\Dcube$ be a $d$-dimensional grid graph and let $v$ be $\Robot$'s current location. 

\begin{enumerate}
    \item \MakeMoveOneD patrols the $1$-floor of $\Dcube$ that contains $v$.
    \item \MakeMoveTwoD patrols the $2$-floor of $\Dcube$ that contains $v$.
    \item \MakeMoveThreeD patrols the $3$-floor of $\Dcube$ that contains $v$.
\end{enumerate}

\end{lemma}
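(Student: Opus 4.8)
The plan is to prove the three parts in order, each time exhibiting the directed graph $H$ on the state space $\{(x_1,\dots,x_d\,|\,m)\}$ induced by the algorithm (restricted to the relevant floor), and showing $H$ is a single directed cycle through every vertex of the floor. Since each algorithm only reads $(l_i,r_i)$ for $i\le k$ and $mem$, its behavior on a $k$-floor is independent of the floor's coordinates, so it suffices to analyze a single $k$-dimensional grid $[n_1]\times\cdots\times[n_k]$ with $n_i\ge 2$. Note that unlike the $0$-bit case, a valid patrol here is a single cycle that covers every \emph{state} $(v\,|\,m)$ that is ever visited; transient states (those with in-degree $0$ in $H$) need not lie on the cycle, but every vertex $v$ of the floor must appear with at least one memory value on the cycle. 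The key observations are: (i) every non-transient state has out-degree exactly $1$ by determinism; (ii) if additionally every non-transient state has in-degree exactly $1$, then the non-transient states decompose into disjoint directed cycles, and the algorithm patrols the floor iff there is exactly one such cycle and it touches every vertex.

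For \textbf{Part 1} (\MakeMoveOneD on a path $[n_1]$): directly trace the dynamics. From $(x\,|\,0)$ with $x<n_1$ we go to $(x{+}1\,|\,0)$; from $(n_1\,|\,0)$ we go to $(n_1{-}1\,|\,1)$; from $(x\,|\,1)$ with $x>1$ we go to $(x{-}1\,|\,1)$; from $(1\,|\,1)$ we go to $(2\,|\,0)$. This is visibly the single cycle $(1|0)\to(2|0)\to\cdots\to(n_1|0)\to(n_1{-}1|1)\to\cdots\to(1|1)\to(2|0)$, covering all of $[n_1]$; the states $(1|0)$ and $(n_1|1)$ have in-degree $0$ and are the transient states, but each vertex of $[n_1]$ still appears on the cycle. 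Starting from an arbitrary initial state, $\Robot$ enters this cycle within one step and then patrols forever.

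For \textbf{Part 2} (\MakeMoveTwoD on $[n_1]\times[n_2]$): I would reorganize the six-case table of \cref{alg:V=1 b=1-Patrolling Algorithm 2D} into the following picture (matching \cref{subfig:visualization.m=1.2d}). In each row $x_2$ with $mem=0$, $\Robot$ sweeps $x_1$ from $1$ up to $n_1$; at $(n_1,x_2|0)$ with $x_2<n_2$ it steps to $(n_1,x_2{+}1|1)$... wait, re-reading: with $mem=0$ and $r_1=0$, if $l_2\cdot r_2\neq 0$ it does $\step[1]=-1, mem=1$, and if $r_2=0$ it does $\step[2]=-1, mem=1$. With $mem=1$: along a row it sweeps $x_1$ back toward $1$ (the $l_1\cdot r_1\neq 0$ cases), at $x_1=1$ it either goes up $x_2$ (resetting $mem=0$) or, if $r_2=0$, steps right and resets $mem=0$. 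So the cycle is a boustrophedon over the $2$-floor that, after covering all rows, uses the top-right corner behavior to return along... I would carefully verify that following arrows from $(1,1|0)$ traverses every vertex and closes up, and identify the transient states (e.g.\ $(1,1|1)$ and similar corners), confirming no non-transient state has in-degree $\ge 2$. The bookkeeping here is the first real obstacle: the table has asymmetric corner cases and one must check that exactly the right states are left transient.

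For \textbf{Part 3} (\MakeMoveThreeD on $[n_1]\times[n_2]\times[n_3]$): here I would use Part 2 modularly. Within each $2$-floor $x_3=q$, \MakeMoveThreeD agrees with \MakeMoveTwoD \emph{except} on the states it intercepts, namely $(1,n_2,q\,|\,0)$ and $(\neg 1,n_2,1\,|\,1)$ and the reserved state $(1,n_2,n_3\,|\,0)$. I would argue: (a) the intercepted state $(1,n_2,q|0)$ with $q<n_3$ — which corresponds to the $l_1=0,r_2=0,mem=0$ branch — was a \emph{transient} state of \MakeMoveTwoD, so removing its (unused) outgoing edge and instead sending it to $(1,n_2,q{+}1|1)$ does not break any existing $2$-floor cycle; (b) by Part 2, within floor $q$ the state $(1,n_2,q|0)$ is reached after $\Robot$ has covered that floor; at that point \MakeMoveThreeD moves it up to floor $q{+}1$ into state $(\neg 1 = 1, n_2, q{+}1|1)$; I must check this lands in the $2$-floor cycle of floor $q{+}1$ at a point from which the rest of floor $q{+}1$ gets covered; (c) for $q=n_3$ the special sub-case ($r_3=0$) sends $(1,n_2,n_3|0)\to(2,n_2,n_3|1)$, and then the $l_1,l_3\neq 0, r_2=0, mem=1$ branch sweeps $x_3$ back down: $(\neg 1,n_2,\neg 1|1)\to(\cdot,\cdot,x_3{-}1|1)$, which must deposit $\Robot$ at $(2,n_2,1|1)$, back inside the ground $2$-floor's cycle to close the loop. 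Assembling (a)--(c) shows the induced graph on the $3$-floor is one big cycle: it is the concatenation of the $n_3$ boustrophedon-sweeps of the $2$-floors, glued by the up-moves at the $(1,n_2,\cdot|0)$ states and the single down-run along column $(2,n_2,\cdot)$ with $mem=1$. Since every vertex of every $2$-floor is covered (Part 2) and the glue moves only add vertices, \MakeMoveThreeD patrols the $3$-floor; and as in Parts 1--2, an arbitrary start enters this cycle within finitely many steps.

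The \textbf{main obstacle} is Part 3 step (b)/(c): verifying that the hand-off states line up so that an up-move from $(1,n_2,q|0)$ lands precisely where floor $q{+}1$'s \MakeMoveTwoD-cycle \emph{begins} (rather than somewhere in its middle, which would cause $\Robot$ to skip vertices or re-enter an already-closed subcycle), and symmetrically that the down-run along $x_3$ with $mem=1$ re-enters the ground floor at the correct point. This requires pinning down, from the Part 2 analysis, exactly which state is the "entry point" of each $2$-floor cycle and confirming it is $(1,n_2,\cdot|1)$ — i.e.\ that $(\neg 1, n_2|1)$ really is the transient state feeding into the $2$-floor's Hamiltonian cycle, as claimed in the text preceding the lemma. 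Once that single structural fact about \MakeMoveTwoD is isolated, Part 3 follows by a clean induction-free concatenation argument.
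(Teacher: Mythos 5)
Your proposal is correct and follows essentially the same route as the paper's proof: parts (1) and (2) are verified by directly tracing the induced state-transition cycle (the paper does this by inspection of the figures), and part (3) is proved modularly by observing that \MakeMoveThreeD only intercepts states that are transient for \MakeMoveTwoD, that the state $(1,n_2,q\,|\,0)$ is reached only after the current $2$-floor is fully covered, and that the up-moves and the single down-run along $x_3$ glue the $2$-floor cycles into one cycle covering the $3$-floor. The "main obstacle" you flag -- pinning down that $(\neg 1, n_2\,|\,1)$ is transient for \MakeMoveTwoD and that the hand-off states line up -- is exactly the content of facts (b)--(f) in the paper's argument, so your plan fills in as intended.
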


Note that since any $d$-dimensional grid graph is a $d$-floor of itself, \cref{lemma:makemoveonetwothreeD_succeed} implies that  \MakeMoveOneD, \MakeMoveTwoD and \MakeMoveThreeD successfully patrol $1$-, $2$- and $3$-dimensional grid graphs, respectively. 

\begin{proof}
\textbf{(1)} can readily be inferred by looking at \cref{subfig:visualization.m=1.1d} - on any given $1$-floor of $\Dcube$, \MakeMoveOneD will simply increase and decrease $x_1$ so as to move between $x_1 = 1$ and $x_1 = n_1$ indefinitely,  patrolling the floor.

\textbf{(2)} can also readily be inferred by looking at \cref{subfig:visualization.m=1.2d}. As can be seen in \cref{subfig:visualization.m=1.2d}, \MakeMoveTwoD works by having $\Robot$ explore $1$-floors in which $1 < x_2 < n_2$ in a similar fashion as \MakeMoveOneD. When $\Robot$ finishes covering such a $1$-floor and has returned to a vertex in which $x_1 = 1$, it moves to the next $1$-floor by incrementing $x_2$. When it finally enters a $1$-floor in which $x_2 = n_2$, it must do so via a vertex of the form $(1,n_2,*)$ (it is possible for $\Robot$ to be \textit{initiated} in such a floor, but not enter it, at a different location). When this happens, \MakeMoveTwoD increments $x_1$ to $n_1$ (thus visiting all vertices of the $1$-floor),  decreases $x_2$ down to $1$, and finally decreases $x_1$ back to $1$, restarting the patrol path. This results in $\Robot$ visiting all vertices of the $2$-floor (note that if $\Robot$ is not initiated with $x_1 = x_2 = 1$, it might need to arrive at $x_1 = x_2 = 1$ twice before it has visited all vertices of the $2$-floor). 

We now prove \textbf{(3)}. The following facts follow from the  pseudocode of \MakeMoveThreeD (\cref{alg:V=1 b=1-Patrolling Algorithm 3D}):

\begin{enumerate}[label=(\alph*)]
\item Outside of the three states  $(1, n_2, \neg n_3,*|0)$, $(\neg 1, n_2, \neg 1,*|1)$, and $(1,n_2,n_3,*|0)$, \MakeMoveThreeD simply executes \MakeMoveTwoD. 
\item \MakeMoveThreeD can only enter a $2$-floor with higher $x_3$-coordinate than its current $2$-floor's in the state  $(1,n_2,*|1)$.
\item \MakeMoveThreeD must transition to the state $(1,n_2,*|0)$ to increment $x_3$ (note the difference from the state mentioned in (b) - this state has $mem = 0$).
\item Assuming $\Robot$ is currently in the state $(1,n_2,*|1)$, it can only enter the state $(1,n_2,*|0)$ after \MakeMoveThreeD has visited all vertices in the current $2$-floor (because \MakeMoveThreeD runs \MakeMoveTwoD until this state occurs, and \MakeMoveTwoD visits all these vertices). 
\item Once \MakeMoveThreeD finishes visiting all vertices of the $x_3 = n_3$  $2$-floor it arrives at the state $(1, n_2, n_3,*|0)$, subsequently transitions to the state $(\neg 1, n_2, n_3,*|1)$ by incrementing $x_1$, and then decrements $x_3$ all the way back to $1$.
\item Upon returning to the $x_3 = 1$ $2$-floor via the chain of events described in (e), \MakeMoveThreeD visits all vertices in that floor before reaching the state $(1,n_2,1,*|0)$ and beginning the patrol path again.
\end{enumerate}

By (a)-(f), \MakeMoveThreeD indefinitely cycles between all $2$-floors of the current $3$-floor by incrementing $x_3$ to $n_3$ and then decrementing $x_3$ back to $1$. Specifically by (c), (d) and (f), every time \MakeMoveThreeD enters a $2$-floor by incrementing $x_3$, and also when it enters the $2$-floor with $x_3 = 1$ by decrementing $x_3$, it visits all vertices of that floor. This implies (3).

(As visual aid for the above analysis, we refer the reader to \cref{fig:visualization.m=1.3d}. In the Figure, the \textbf{bold}-colored arrows show how \MakeMoveThreeD cycles between $2$-floors. Besides the bold arrows, its action at each vertex is based on \MakeMoveTwoD.)%Whenever $\MakeMoveThreeD$ increases $x_3$, it sets its $mem$ to $0$, thus entering the state $(1,n_2,\neg 1|1)$. From this state, it runs \MakeMoveTwoD until it arrives at $(1,n_2,*)$ for a second time. Since, when initialized at $(1,n_2,*|1)$, \MakeMoveTwoD only arrives at $(1,n_2,*)$ after visiting all other vertices, this means that $\MakeMoveThreeD$
\end{proof}

In general, our Algorithm for patrolling $k+1$-dimensional grid graphs shall be called \MakeMovekplusoneD. This algorithm is built recursively, calling \MakeMovekD as a subroutine. The concept of \MakeMovekplusoneD is similar to \MakeMoveThreeD: it has a small number of \textit{specially-designated states} whose purpose is to increment and decrement $x_{k+1}$, and at all other times it executes \MakeMovekD. Pseudocode for \MakeMovekplusoneD is available  - see \cref{alg:V=1 b=1-Patrolling Algorithm (k+1)D}. The algorithm can also be defined non-inductively, i.e., without calling \MakeMovekD: see \cref{appendix:nonrecursivepseudocode}.

\begin{algorithm}[ht]
    
    \SetKwFunction{step}{step}
    \SetKwFunction{MakeMove}{MakeMove(k+1)D}
    \SetKwFunction{MakeMovekD}{MakeMove(k)D}

    \SetKw{And}{and}
    \SetKw{Not}{not}
    \SetKw{Or}{or}
    \SetKw{Continue}{continue}

    \SetArgSty{textrm}
    
    \BlankLine
    \SetKwProg{Fn}{Function}{}{}    
    \DontPrintSemicolon
    \Fn{\MakeMove}{
    \KwIn{$\{(\textit{l}_1, \textit{r}_1), \ldots, (\textit{l}_{k+1}, \textit{r}_{k+1})\}$, $mem \in \{0, 1\}$.}
        \BlankLine       
        $\step = (0, \ldots, 0)$ \tcp*{$d$ - dimensional zero vector}

        \BlankLine       
        \uIf{$\bigwedge\limits_{j=2}^{k-1}l_j=0\;\And\;r_{k}=0\;\And\;mem=1$} {
        \BlankLine       
            \uIf{$r_1 = 0$}{
                \uIf{$r_{k+1} = 0$} {
                    $step[1] = -1$
                    \label{AlgokD:1st-case}
                }
                \uElse{
                    $\step[k+1] = 1$ \tcp*{move up $x_{k+1}$}
                    \label{AlgokD:2nd-case}
                    $mem = 0$
                }
                \Return \step, $mem$
            }
            \uIf{$l_{k+1} \neq 0$} {
                $\step[k+1] = -1$ \tcp*{move down $x_{k+1}$}
                \label{AlgokD:3rd-case}
                \Return \step, $mem$
            }
        }

        \BlankLine       
        \Return \MakeMovekD($\{(\textit{l}_1, \textit{r}_1), \ldots, (\textit{l}_k, \textit{r}_k)\}, \textit{mem}$)
    }
\caption{A $(k+1)$-dimensional grid graph patrolling algorithm, using sensing range $V=1$  and $1$ bit of internal memory.}
\label{alg:V=1 b=1-Patrolling Algorithm (k+1)D}	
\end{algorithm}

The main difference between \MakeMovekD, $k > 3$ and \MakeMoveThreeD is that the specially-designated states of \MakeMovekD are $(n_1,1, \ldots, 1, n_{k-1}, n_{k} ,*|1)$, $(n_1,1, \ldots, 1, n_{k-1}, \neg n_{k} ,*|1)$ and $(\neg n_1,1, \ldots, 1, n_{k-1},\neg 1,*|1)$, whereas $\MakeMoveThreeD$ specially designates the states $(1,n_2,n_3,*|0)$, $(1,n_2,\neg n_3,*|0)$ and $(\neg 1,n_2,\neg 1,*|1)$. For some intuition about why we choose different types of states in the $3$-dimensional case, first note that at the lowest level of recursion, \MakeMovekplusoneD executes \MakeMoveTwoD - and it only deviates from \MakeMoveTwoD in a small number of states. Thus, \MakeMovekplusoneD can be understood as executing \MakeMoveTwoD until, in a given  $2$-floor, it hits a specially-designated state of some higher-dimensional \MakeMoveAll algorithm. Consequently, it is very important that these designated states do not break \MakeMoveTwoD's ability to patrol $2$-floors. The $x_1$ and $x_2$ coordinates of the specially-designated states of \MakeMovekD, $k > 3$, are always $(n_1, 1)$ and $(\neg n_1, 1)$. However, if instead of calling \MakeMoveThreeD as a subroutine of $\MakeMovekD$ when $k = 4$, we naively tried to call \MakeMovekD with $k = 3$, we see from the pseudocode that its designated states'  $x_1$ and $x_2$ coordinates would have been  $(n_1,n_2)$ and $(\neg n_1, n_2)$ - requiring $2$-floors to specially handle these coordinates in addition to $(n_1, 1)$ and $(\neg n_1, 1)$. It is difficult to repurpose these coordinates without breaking \MakeMoveTwoD, and we found it easier, instead, to have \MakeMoveThreeD  interact with the $(1,n_2)$ and $(\neg 1, n_2)$ states of $2$-floors.

%Note that whereas every $2$-floor contains at least one specially-designated state of \MakeMoveThreeD, no $2$-floor contains specially-designated states of both \MakeMoveiD and \MakeMovejD for different $i, j > 3$. Thus, when choosing \MakeMoveThreeD's special states, we must consider 
%we must take special care not to overlap \MakeMoveThreeD's specially-designated states with those of higher-dimensional patrolling algorithms'. It \textit{may} still be possible to unify the $3$-dimensional case and higher-dimensional case by a different approach, but we did not find an easy way to do so.

\begin{lemma}
Let $\Dcube$ be a $d$-dimensional grid graph and let $v$ be $\Robot$'s current location.  \MakeMovekD patrols the $k$-floor of $\Dcube$ that contains $v$.
\label{lemma:makemovekworks}
\end{lemma}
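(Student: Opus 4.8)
The plan is to argue by induction on $k$. The base cases $k\in\{1,2,3\}$ are exactly \cref{lemma:makemoveonetwothreeD_succeed}, so suppose $k\ge 3$, assume \MakeMovekD patrols every $k$-floor of $\Dcube$ from every starting position and memory value, and derive the same for \MakeMovekplusoneD and $(k+1)$-floors. The argument is a higher-dimensional replay of the proof of \cref{lemma:makemoveonetwothreeD_succeed}(3): the coordinate $x_{k+1}$ plays the role that $x_3$ did there, \MakeMovekD plays the role of \MakeMoveTwoD, and the three ``specially-designated states'' of \MakeMovekplusoneD that one reads off from \cref{alg:V=1 b=1-Patrolling Algorithm (k+1)D} — call their families $T_1$, $T_2$, $T_3$ — play the role of the three designated states of \MakeMoveThreeD. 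Fix a $(k+1)$-floor $F$ and decompose it into the $k$-floors $F_1,\dots,F_{n_{k+1}}$ indexed by the value of $x_{k+1}$.

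From the pseudocode one reads off the bookkeeping facts: (i) outside $T_1\cup T_2\cup T_3$, \MakeMovekplusoneD runs \MakeMovekD verbatim and in particular never changes $x_{k+1}$; (ii) $T_2$ fires only while $x_{k+1}<n_{k+1}$, increments $x_{k+1}$, resets $mem$ to $0$, and deposits $\Robot$ at a fixed ``portal'' vertex $\pi$ of the new slice (the vertex $(n_1,1,\dots,1,n_k)$); (iii) $T_3$ fires only while $x_{k+1}>1$, decrements $x_{k+1}$, and leaves $mem=1$; (iv) $T_1$ fires only when $x_{k+1}=n_{k+1}$, taking $(\pi\,|\,1)$ to a state with $x_1=n_1-1$ that is itself a $T_3$-state, starting a ``descent''. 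The technical heart is a strengthened hypothesis that I would carry alongside the patrolling claim and verify for $k\le3$ by direct inspection of \cref{alg:V=1 b=1-Patrolling Algorithm 2D}, \cref{alg:V=1 b=1-Patrolling Algorithm 3D}, \cref{subfig:visualization.m=1.2d} and \cref{fig:visualization.m=1.3d}: \emph{the run of \MakeMovekD from $(\pi\,|\,0)$ inside a $k$-floor visits every vertex of that floor and then reaches $(\pi\,|\,1)$, the state $(\pi\,|\,1)$ lies on the unique cycle of the \MakeMovekD dynamics in that floor, and at no state strictly between $(\pi\,|\,0)$ and the first recurrence of $(\pi\,|\,1)$ does $\Robot$ simultaneously have $mem=1$ and coordinates $(a,1,\dots,1,n_k)$ with $a\neq n_1$}. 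At the bottom of the recursion this boils down to the fact that along \MakeMoveTwoD's patrol cycle the only vertex of the $x_2=1$ edge traversed with $mem=1$ is $(n_1,1)$; the precise coordinates in the hypothesis are slightly different for $k=2,3$, matching the different designated states of \MakeMoveTwoD and \MakeMoveThreeD, but it is structurally the same statement and is a finite check.

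Granting this, the ``clean cycle'' of \MakeMovekplusoneD on $F$ is as follows. Whenever $\Robot$ sits at $(\pi\,|\,0)$ in a slice $F_j$ with $1\le j<n_{k+1}$, the strengthened hypothesis forces \MakeMovekD to cover all of $F_j$ and reach $(\pi\,|\,1)$ with no premature $T_1$ (impossible for $j<n_{k+1}$) or $T_3$; then $T_2$ fires and $\Robot$ arrives at $(\pi\,|\,0)$ in $F_{j+1}$. When the cycle reaches $F_{n_{k+1}}$, that slice is again covered in full and exited at $(\pi\,|\,1)$, where $T_1$ fires and then $T_3$ fires repeatedly, plummeting $\Robot$ to the state in $F_1$ with $x_1=n_1-1$ and $x_{k+1}=1$; there $x_{k+1}=1$ makes $T_1,T_3$ inert, so by the plain hypothesis \MakeMovekD patrols $F_1$ and in due course reaches $(\pi\,|\,1)$, whence $T_2$ fires and the cycle restarts. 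Thus the clean cycle covers $F_1,\dots,F_{n_{k+1}}$ in turn, each in full, and repeats forever, so every vertex of $F$ is visited. An arbitrary starting state is handled exactly as in \cref{lemma:makemoveonetwothreeD_succeed}(2)--(3): \MakeMovekD runs in the current slice until it reaches $(\pi\,|\,1)$ — which the plain hypothesis guarantees from anywhere — possibly after one premature $T_3$-cascade that merely relocates $\Robot$ to $F_1$; either way $T_1$ or $T_2$ fires and, after finitely many slice transitions, $\Robot$ joins the clean cycle, after which $F$ is covered; an incompletely-covered first slice is harmless because the clean cycle re-covers every slice. Thin degenerate cases ($n_{k+1}=2$, or slices with few large dimensions) collapse gracefully and are checked by inspection. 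Finally one re-establishes the strengthened hypothesis for $F$: the $T_3$-coordinates relevant at level $k+1$ force $x_{k+1}=n_{k+1}$, so they can only be visited while $\Robot$ is in $F_{n_{k+1}}$, where the hypothesis for the $k$-floor $F_{n_{k+1}}$ already excludes them.

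The step I expect to be the genuine obstacle is the strengthened hypothesis itself — showing that \MakeMovekD's recursively defined patrol of an interior slice never parks $\Robot$ at a $T_3$-configuration before that slice is exhausted, since one premature $T_3$ would eject $\Robot$ and break the sweep. This is precisely the ``small economy of transient states'' alluded to in the text: the coordinates assigned to $T_1,T_2,T_3$ must be ones that \MakeMovekD visits with $mem=1$ only at the very end of a floor's patrol cycle. Driving this property down through the recursion to the \MakeMoveTwoD base case, and verifying that each newly added layer of designated states preserves it, is where essentially all the work lies; a recurring fiddly point — already visible at the \MakeMoveThreeD level — is that the portal vertex $\pi$ at which a slice is entered need not coincide with the natural ``wrap-around'' vertex of that slice's own patrol cycle, so some bookkeeping is needed to see that \MakeMovekD's trajectory from $\pi$ still reaches $(\pi\,|\,1)$ and covers the slice before any forbidden configuration can arise.
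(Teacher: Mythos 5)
Your proposal follows the paper's own proof essentially step for step: induction on $k$ with \cref{lemma:makemoveonetwothreeD_succeed} as the base, the three designated state families driving an ascend-then-descend sweep through the $k$-floors, the portal vertex $(n_1,1,\ldots,1,n_k)$ being enterable with $mem=0$ only by incrementing $x_{k+1}$, and the special treatment of the $x_{k+1}=1$ floor entered at $x_1=n_1-1$. The ``strengthened hypothesis'' you isolate (portal-to-portal coverage with no premature $T_3$ configuration) is exactly the invariant the paper reads off from \cref{table:actionsofmakemovekplus1d} and the \texttt{MakeMove2D} pseudocode, so you have identified the same crux rather than a genuinely different route.
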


\begin{proof}

We prove the Lemma by induction. The base cases $k = 1..3$ are given by   \cref{lemma:makemoveonetwothreeD_succeed}. Let us assume the Lemma's statement is true for \MakeMovekD, $k \geq 3$, and prove that it remains true for \MakeMovekplusoneD. The proof idea will be similar to our analysis of \MakeMoveThreeD in \cref{lemma:makemoveonetwothreeD_succeed}.

\begin{table}[!ht]
\centering
\caption{Overview of \protect\MakeMovekplusoneD (\cref{alg:V=1 b=1-Patrolling Algorithm (k+1)D}) based on $\Robot$'s current coordinate and $mem$.  ``Lines'' refers to the relevant lines in the pseudocode.}
\begin{tabular}{|c|c|c|c|}
\hline
\textbf{State} & \textbf{Lines} & \textbf{Action} \\
\hline\hline
$(1,n_2,n_3,*|0)$ & \cref{alg:V=1 b=1-Patrolling Algorithm 3D}, \cref{Algo3D:2nd-case} & Increase $x_1$ and set $mem=1$ \\ \hline
$(1,n_2,\neg n_3,*|0)$ & \cref{alg:V=1 b=1-Patrolling Algorithm 3D}, \cref{Algo3D:1st-case} & Increase $x_3$ and set $mem=1$\\ \hline
$(\neg 1,n_2,\neg 1,*|1)$ & \cref{alg:V=1 b=1-Patrolling Algorithm 3D}, \cref{Algo3D:3rd-case}& Decrease $x_3$ \\ \hline
$(n_1,1, \ldots, 1, n_k, n_{k+1} ,*|1)$, $k < d$  & \cref{alg:V=1 b=1-Patrolling Algorithm (k+1)D}, \cref{AlgokD:1st-case} & Decrease $x_{1}$ \\ \hline
$(n_1,1, \ldots, 1, n_k, \neg n_{k+1} ,*|1)$, $k < d$  & \cref{alg:V=1 b=1-Patrolling Algorithm (k+1)D}, \cref{AlgokD:2nd-case} & Increase $x_{k+1}$ and set $mem=0$ \\ \hline
$(\neg n_1,1, \ldots, 1, n_k,\neg 1,*|1)$, $k < d$ & \cref{alg:V=1 b=1-Patrolling Algorithm (k+1)D}, \cref{AlgokD:3rd-case} & Decrease $x_{k+1}$ \\ \hline
Else & \  -  & Run \MakeMoveTwoD (\cref{alg:V=1 b=1-Patrolling Algorithm 2D}) \\ \hline
\end{tabular}
\label{table:actionsofmakemovekplus1d}
\end{table}

The actions of \MakeMovekplusoneD in any given state are written out in \cref{table:actionsofmakemovekplus1d} (as can be confirmed from the pseudocode). We see from \cref{table:actionsofmakemovekplus1d} that $\Robot$ always enters a new $k$-floor with higher $x_k$ coordinate in the state $(n_1,1, \ldots, 1, n_k ,*|0)$. In fact, we readily verify from the Table that $\Robot$ \textit{only} enters the state $(n_1,1, \ldots, 1, n_k ,*|0)$ after it increments $x_{k+1}$ via the state $(n_1,1, \ldots, 1, n_k , \neg n_{k+1}, *|1)$ (no other row of \cref{table:actionsofmakemovekplus1d} transitions to this state; in particular,  \MakeMoveTwoD (\cref{alg:V=1 b=1-Patrolling Algorithm 2D}) always enters the state $(n_1,1,*)$ with $mem = 1$, thus cannot transition to the state $(n_1,1, \ldots, 1, n_k ,*|0)$). 

Let $(q_1, q_2, \ldots q_d)$ be $\Robot$'s current location. By the inductive assumption, \MakeMovekD patrols the $F_k(q_{k+1},\ldots,q_d)$ floor. We infer from the previous paragraph that \MakeMovekD must visit all vertices of $F_k(q_{k+1},\ldots,q_n)$ between each visit to $(n_1,1, \ldots, 1, n_k ,q_{k+1}, \ldots q_d)$, since as long as $\Robot$ is in the $F_k(q_{k+1},\ldots,q_d)$ floor, \MakeMovekD can only enter the $(n_1,1, \ldots, 1, n_k ,q_{k+1}, \ldots q_d)$  vertex with $mem = 1$ (i.e., it can only enter this vertex with $mem = 0$ \textit{from another floor}), hence $\Robot$ must walk the exact same path whenever it reaches that location.

\MakeMovekplusoneD executes \MakeMovekD until arriving at one of the states $(n_1,1, \ldots, 1, n_k, \neg n_{k+1} ,*|1)$, $(\neg n_1,1, \ldots, 1, n_k,\neg 1,*|1)$, or $(n_1,1, \ldots, 1, n_k, n_{k+1} ,*|1)$. $\Robot$ can only transition to the state $(\neg n_1,1, \ldots, 1, n_k,\neg 1,*|1)$ from this state itself or from $(n_1,1, \ldots, 1, n_k, \neg n_{k+1} ,*|1)$. Because of this, and because whenever \MakeMovekplusoneD enters a new floor by incrementing $x_{k+1}$ it does so in the state $(n_1,1, \ldots, 1, n_k, \neg n_{k+1} ,*|0)$, we see from the prior paragraph that whenever \MakeMovekplusoneD enters a new $k$-floor by incrementing $x_{k+1}$, it runs \MakeMovekD until it completely patrols that $k$-floor. In addition, $\Robot$ finishes patrolling the $k$-floor it entered by incrementing $x_{k+1}$ \textit{precisely} when it reaches the state $(n_1,1, \ldots, 1, n_k, *|1)$, at which point it either increments $x_{k+1}$ (if $x_{k+1} \neq n_{k+1}$), or it transitions to $(\neg n_1,1, \ldots, 1, n_k,\neg 1,*|1)$ by decrementing $x_1$ from $n_1$ to $n_1 - 1$, and subsequently decrements $x_k$ all the way down to $1$. 

To summarize what we have so far: we know that \MakeMovekplusoneD repeatedly cycles through all $k$-floors of the current $k+1$-floor, and that it visits all vertices of any $k$-floor it enters by incrementing $x_{k+1}$. Thus, \MakeMovekplusoneD eventually visits all vertices of $k$-floors of the current $k+1$-floor whose $x_{k+1}$ coordinate is greater than $1$. 

It remains to show that the (unique) $k$-floor of the current $k+1$-floor whose $x_{k+1}$ coordinate is $1$ is also fully visited. In the steady state of cycling through $k$-floors described above, this $k$-floor is necessarily entered via the state $(\neg n_1,1, \ldots, 1, n_k,1,*|1)$, upon decrementing $x_{k+1}$. More specifically, $\Robot$'s $x_1$ coordinate when entering this $k$-floor must be $n_1 - 1$, so its state is $(n_1 - 1,1, \ldots, 1, n_k,1,*|1)$. From the state $(n_1 - 1,1, \ldots, 1, n_k,1,*|1)$, \MakeMovekplusoneD runs \MakeMoveTwoD, which decrements $x_1$ until $\Robot$ arrives at the state $(1,1, \ldots, 1, n_k,1,*|1)$, after which $\Robot$ transitions to $(1,2, \ldots, 1, n_k,1,*|0)$. When $\Robot$ enters a $k$-floor in the state $(n_1,1, \ldots, 1, n_k, \neg n_{k+1} ,*|0)$, it decrements $n_1$ until arriving at $(1,1, \ldots, 1, n_k, \neg n_{k+1} ,*|0)$, and subsequently transitions to $(1,2, \ldots, 1, n_k, \neg n_{k+1} ,*|0)$ as well. Thus in both cases $\Robot$ visits the same vertices in the same order (albeit with different $mem$ values) and arrives at the state $(1,2, \ldots, 1, n_k, \neg n_{k+1} ,*|0)$). We have already proven earlier that when $\Robot$ enters the state $(n_1,1, \ldots, 1, n_k, \neg n_{k+1} ,*|0)$, it visits the entire $k$-floor before arriving at the state $(n_1,1, \ldots, 1, n_k, \neg n_{k+1} ,*|1)$. We infer from this that $\Robot$ also visits the entire  $x_{k+1} = 1$ $k$-floor when entering it in the state $(n_1-1,1, \ldots, 1, n_k,1,*|1)$. This completes the proof.% (the only difference is that it doesn't visit the vertex $(n_1,1, \ldots, 1, n_k, q_{k+1} , \ldots q_n)$ twice, but instead just visits the state  $(n_1,1, \ldots, 1, n_k, \neg n_{k+1} ,*|1)$ right before exiting the $k$-floor). This completes the proof.
\end{proof}

Let us give the name ``\MakeMoveAll'' to the algorithm that patrols a $d$-dimensional grid graph $\Dcube$ by running $\MakeMovekD$ with $k = d$  (where $d$ is inferred from $\Robot$'s sensing data, $\Sense{1}{\Position}$). We can now prove the main theorem of this section:

\begin{theorem}
Let $\Dcube$ be any grid graph. \MakeMoveAll patrols $\Dcube$, visiting every vertex in at most $2|\Dcube|$ steps, using $V=1$ sensing range and $1$ bit of memory.
\label{theorem:canpatrolwith1bit}
\end{theorem}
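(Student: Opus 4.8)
The plan is to separate the statement into three parts: (i) \MakeMoveAll is a bona fide $V=1$, $1$-bit algorithm; (ii) it patrols $\Dcube$; (iii) it covers $\Dcube$ within $2|\Dcube|$ steps. Parts (i) and (ii) are almost immediate from what precedes. By definition \MakeMoveAll is \MakeMovekD run with $k=d$, where $d$ is read off from $\Sense{1}{\Position}$; this reading is legitimate because $n_i>1$ for every $i$ forces $\Robot$ to have at least one grid-neighbor along each axis $i$, so the set of axes represented among the nonzero vectors of $\Sense{1}{\Position}$ is exactly $\{1,\dots,d\}$, and (with its compass) $\Robot$ can count them. Since any $d$-dimensional grid graph is a $d$-floor of itself, \cref{lemma:makemovekworks} with $k=d$ gives that \MakeMoveAll patrols $\Dcube$, settling (ii). For (i), a direct reading of the pseudocode of \cref{alg:V=1 b=1-Patrolling Algorithm 1D} through \cref{alg:V=1 b=1-Patrolling Algorithm (k+1)D} shows that the only inputs consumed are $mem\in\{0,1\}$ and the boundary distances $(l_i,r_i)$ (which for $V=1$ lie in $\{0,1\}$ and are determined by $\Sense{1}{\Position}$), and the only outputs produced are a unit step and an updated $mem\in\{0,1\}$; so no sensing or memory constraint is violated.

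For part (iii) I would argue via the configuration space $\mathcal{S}=\Dcube\times\{0,1\}$ of (vertex, memory bit) pairs, which has exactly $|\mathcal{S}|=2|\Dcube|$ elements. Because \MakeMoveAll is deterministic, it induces a map $\Phi\colon\mathcal{S}\to\mathcal{S}$, and the orbit $c_0,\,c_1=\Phi(c_0),\,c_2=\Phi(c_1),\dots$ of any starting configuration is eventually periodic: there are a smallest $\mu\ge 0$ and smallest $\lambda\ge 1$ with $c_{\mu+\lambda}=c_\mu$, and $c_0,\dots,c_{\mu+\lambda-1}$ are pairwise distinct, so $\mu+\lambda\le|\mathcal{S}|=2|\Dcube|$. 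Let $C=\{c_\mu,\dots,c_{\mu+\lambda-1}\}$ be the cyclic part. Every $c\in C$ is itself a legitimate initial configuration, and the orbit of $\Phi$ started at $c$ is exactly $C$ traced out forever; since \MakeMoveAll patrols $\Dcube$ from $c$ (part (ii)), $\Robot$ must visit every vertex of $\Dcube$ while never leaving $C$, so the vertex-projection of $C$ is all of $\Dcube$. Consequently, after $T=\mu+\lambda-1\le 2|\Dcube|-1$ steps $\Robot$ has occupied the configurations $c_0,\dots,c_T\supseteq C$, hence every vertex of $\Dcube$. This gives the claimed $2|\Dcube|$ bound (indeed even $2|\Dcube|-1$).

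The only point that needs care, and the nearest thing to an obstacle, is the inference in the previous paragraph from ``\MakeMoveAll patrols $\Dcube$'' to ``the cyclic part $C$ of the orbit hits every vertex'': this rests on the elementary fact that a deterministic self-map of a finite set has eventually periodic orbits with pairwise-distinct pre-period-plus-cycle, together with the observation (already recorded in the remark following \cref{definition:patrolling}) that the patrolling property applies verbatim to every configuration lying on $C$. Beyond phrasing the configuration-count so that $\mu+\lambda\le 2|\Dcube|$ is manifest, nothing here is delicate: the substantive combinatorial work is entirely contained in \cref{lemma:makemovekworks}, and the rest is bookkeeping on the pseudocode and a finite-state counting argument.
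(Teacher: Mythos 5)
Your proposal is correct and follows essentially the same route as the paper: parts (i) and (ii) are discharged exactly as in the paper's proof (reading the resource usage off the pseudocode and invoking \cref{lemma:makemovekworks} with $k=d$), and your configuration-space orbit argument for part (iii) is a rigorous formalization of the paper's one-line remark that the agent ``can visit any vertex at most twice before resetting its patrol path.'' Your version of the step bound is in fact slightly sharper (giving $2|\Dcube|-1$) and more careful, since it makes explicit the point the paper leaves implicit --- that the periodic part of the orbit of the deterministic map on $\Dcube\times\{0,1\}$ must itself project onto all of $\Dcube$ because the patrolling property of \cref{definition:patrolling} applies to every configuration on that cycle.
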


\begin{proof}
\cref{lemma:makemovekworks} implies that \MakeMoveAll patrols $\Dcube$. As can be seen from the pseudocode, \MakeMovekD uses exactly $1$ bit of memory and sensing range $V=1$, and consequently so does \MakeMoveAll. Since \MakeMoveAll uses $1$ bit of memory, it can visit any vertex at most twice before resetting its patrol path, and so \MakeMoveAll takes at most $2|\Dcube|$ steps to visit each vertex. 
\end{proof}

It may be of note that, due to details of the recursion, despite \cref{lemma:makemovekworks}'s proof possibly (mis)leading one to think so, \MakeMoveAll does not traverse typically traverse entire floors at once: especially when $k \leq d - 2$, \MakeMoveAll will often depart from $k$-floors without completing them, only to return later.

As mentioned earlier, \MakeMoveAll can also be written explicitly, without recursive calls to $\MakeMovekD$  - see \cref{alg:V=1 b=1-Patrolling Algorithm 3+D}.

\section{Discussion}

We set out to answer whether a given $d$-dimensional grid graph can be patrolled with sensing range $V$ and $b$ bits of memory. Our results settle this question: \cref{theorem:canpatrolwith1bit} shows that $b = 1$ and $V = 1$ are sufficient to patrol any grid graph, and \cref{theorem:grid graphsthatcanbepatrolledwithVvisibility0bits} characterizes the values of $V$ for which a grid graph can be patrolled with $b = 0$ bits of memory. Our $1$-bit patrolling result is surprising because it defies the intuition, based on existing complexity bounds, that memory requirements should grow with the graph's maximum degree.  This demonstrates that highly regular environments like grids admit extremely space-efficient patrolling algorithms compared to arbitrary graphs. 

An important direction for future work is to establish the space complexity of patrolling broader classes of highly structured environments beyond grids, such as other subgraph families of $\mathbb{Z}^d$. As explained in the introduction,  our $1$-bit algorithm, and the techniques we developed here, such as analyzing sensing regions,  provide a foundation to potentially obtain improved space complexity results for more general environments, such as Cartesian products of general $Z^d$ subspaces. More broadly, mapping the space complexity landscape of patrolling in structured environments is an important challenge where progress seems possible by exploiting inherent regularities in the environments under consideration.

Overall, our work gives concrete evidence that very little memory is needed for patrolling highly regular environments compared to arbitrary graphs. Deepening our understanding of the computational resources required for patrolling in different environments has important implications for diverse real-world domains, from operations research to robotics, where patrolling is a fundamental problem.

\bibliography{bibliography} 
\bibliographystyle{plain}

\appendix
\newpage 
\newpage
\section{Appendix - Non-inductive Pseudocode}
\label{appendix:nonrecursivepseudocode}

Our $d$-dimensional grid graph patrolling algorithm,  \MakeMovekD, was inductively defined by the pseudocode given in \cref{alg:V=1 b=1-Patrolling Algorithm (k+1)D}. Here we explicitly write out the same algorithm in non-inductive fashion - see \cref{alg:V=1 b=1-Patrolling Algorithm 3+D}.  

\begin{algorithm}[!ht]
    \SetKw{KwGoTo}{goto}

    \SetKwFunction{step}{step}
    \SetKwFunction{movedownxk}{move\_down\_$x_{k+1}$}
    \SetKwFunction{moveupxk}{move\_up\_$x_{k+1}$}
    \SetKwFunction{MakeMove}{MakeMove}
    \SetKwFunction{MakeMoveTwoD}{MakeMove2D}

    \SetKw{And}{and}
    \SetKw{Not}{not}
    \SetKw{Or}{or}
    \SetKw{Continue}{continue}
    \SetKw{Break}{break}
    
    \SetArgSty{textrm}
    \BlankLine
    \SetKwProg{Fn}{Function}{}{} 
    \Fn{\MakeMove}{
    \DontPrintSemicolon
    \KwIn{$\{(\textit{l}_1, \textit{r}_1), \ldots (\textit{l}_d, \textit{r}_d)\}$, $mem \in \{0,1\}$.}
        \BlankLine
        $\step = (0, \ldots, 0)$ \tcp*{$d$ - dimensional zero vector}
        \BlankLine
        \uIf(\tcp*[f]{see bold up-arrows in \cref{fig:visualization.m=1.3d}}){$mem = 0\;\And\;l_1=0\;\And\;r_2=0$}{
        \label{code:starthandlespecialcorner1}
            \lIf{$r_3=0$}{
                $step[1] = 1$ \tcp*[f]{maxed $x_3$}
            \label{code:handlerow1table}
            }
            \lElse{
                $step[3] = 1$ \tcp*[f]{go up the $x_3$ axis}
            \label{code:handlerow2table}
            }
            $mem = 1$\;
            \Return \step, $mem$\;
        }
        
        \BlankLine
        \uIf{$mem = 1$} {
        $k = \underset{j > 2}\argmin\; l_j > 0$ \tcp*{find the first coordinate with $x_k > 1$}
        $\moveupxk=\;$\leIf{$r_1=l_2 = 0\;$}{1}{0}
        \BlankLine
        \uIf{$\moveupxk\;\And\;k<d\;\And\;r_k=0$}{
            \label{code:handlespecialcorner2}
            \lIf{$r_{k+1}=0$}{
                \label{code:handlespecialcorner6}
                $step[1] = -1$ \tcp*[f]{maxed $x_{k+1}$}
            } \uElse {
            \label{code:starthandlespecialcorner2}
                $step[k + 1] = 1$ \tcp*[r]{go up the $x_{k+1}$ axis}
                $mem = 0$
            }
            \Return \step, $mem$\;
            \label{code:endhandlespecialcorner2}
        }
        \BlankLine
            \uIf{$l_1 \neq 0\;\And\;r_2=0$}{
                \uIf{$l_3 \neq 0$}{
                \label{code:starthandlespecialcorner3}
                    $step[3] = -1$ \tcp*[r]{go down the $x_{3}$ axis}
                    \Return \step, $mem$
                }
                \label{code:endhandlespecialcorner3}
        \BlankLine
           \Return \MakeMoveTwoD($\{(\textit{l}_1, \textit{r}_1),  (\textit{l}_2, \textit{r}_2)\}, \textit{mem}$)
           \tcp*[r]{$x_3=1$}
           %\label{Return2D}
            }
        \BlankLine
            $\movedownxk=\;$\leIf{$r_1 \neq 0\;\And\;l_2=0$}{1}{0}
        \BlankLine
            \uIf{$\movedownxk\;\And\;k<d\;\And\;r_k=0$}{
                \uIf{$l_{k+1} \neq 0$}{
                    $step[k+1] = -1$ \tcp*[r]{go down the $x_{k+1}$ axis}
                    \Return \step, $mem$\;
            \label{code:handlespecialcorner4}
                }
        \BlankLine
                \Return \MakeMoveTwoD($\{(\textit{l}_1, \textit{r}_1), (\textit{l}_2, \textit{r}_2)\}, \textit{mem}$)\tcp*[f]{$x_{k+1} = 1$}
            }
        }
        \BlankLine
        \Return \MakeMoveTwoD($\{(\textit{l}_1, \textit{r}_1),  (\textit{l}_2, \textit{r}_2)\}, \textit{mem}$)\tcp*{patrol axes $x_1$, $x_2$ using \cref{alg:V=1 b=1-Patrolling Algorithm 2D}}
    }

% define OnLeft(i) := l_i == 0
% define OnRight(i) := r_i == 0
% define Interior(i) := not OnLeft(i) and not OnRight(i)
\caption{A $V=1$ sensing range, $d$-dimensional grid graph patrolling algorithm, using $1$ bit of internal memory.}
\label{alg:V=1 b=1-Patrolling Algorithm 3+D}	
\end{algorithm}

For simplicity, \cref{alg:V=1 b=1-Patrolling Algorithm 3+D} as given only works for $d$-dimensional grid graphs where $d \geq 2$; we do not handle the $d = 1$ case. Such a case can easily be handled by detecting the dimension of $\Dcube$ based on the sensory inputs $\{(\textit{l}_1, \textit{r}_1), \ldots (\textit{l}_d, \textit{r}_d)\}$ and calling \MakeMoveOneD (\cref{subfig:visualization.m=1.1d}) if $\Dcube$ is 1-dimensional. 

%\end{appendix}

\section{Appendix - Patrolling Non-Grid Environments}
\label{appendix:patrollingingeneral}

We can consider a more general model where our robot, $\Robot$, is located in a finite connected subgraph of $\mathbb{Z}^d$,  $\mathcal{G} \subset \mathbb{Z}^d$, and can sense vertices in $\mathcal{G}$ at Manhattan distance $V$ from itself:

\begin{equation*}
\Sense{V}{\Position} \defeq \Set{\Position' - \Position}{\Position' = (x_1', x_2', \ldots x_d') \in \mathcal{G}, \lVert \Position' - \Position \rVert_1 \leq V}
\end{equation*}

The definition of patrolling can directly be extended to non-grid environments:

\begin{definition}
Let $\mathcal{G} \subset Z^d$ be a finite connected subgraph of $\mathbb{Z}^d$. An algorithm $\mathrm{ALG}$ is said to \textbf{patrol} $\mathcal{L} \subset \mathcal{G}$ if, given any initial position $v_1 \in \mathcal{L}$ and initial memory state, executing $\mathrm{ALG}$ causes $\Robot$ to visit all vertices  $v \in \mathcal{L}$ within a finite number of steps.
\end{definition}

Other core definitions, such as our definition of sensing regions (\cref{definition:sensingregions}), can similarly be carried over. \cref{fig:cartesian-product.examples} illustrates some algorithms for patrolling non-grid subgraphs of $\mathbb{Z}^d$.

\begin{figure}[ht]
    \centering
    \begin{subfigure}{0.48\textwidth}
        \centering
        \resizebox{\textwidth}{!}{
            \includegraphics[page=1]{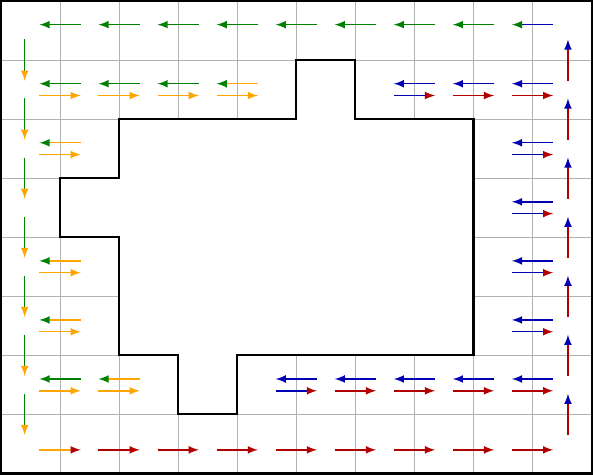}
        }
        \caption{}
        \label{subfig:cartesian-product.circle}
    \end{subfigure}
    \hfill
    \begin{subfigure}{0.48\textwidth}
        \centering
        \resizebox{\textwidth}{!}{
            \includegraphics[page=1]{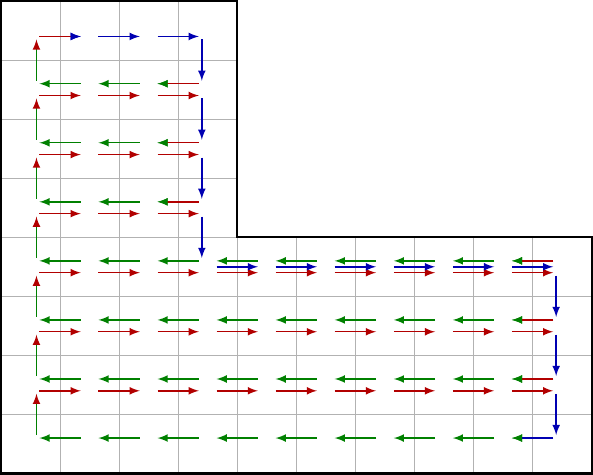}
        }
        \caption{}
        \label{subfig:cartesian-product.Lcorner}
    \end{subfigure}
    \caption{(a) illustrates a $2$-bit algorithm for patrolling a grid with a hole in the middle; (b) illustrates a $\log_2(3)$-bit algorithm (i.e., requiring $3$ unique states) for patrolling an L-shaped environment. Both algorithms assume sensing range $V = 1$. The illustrations should be interpreted like \cref{fig:visualization.m=1.1-2d}, but with additional colors representing new memory states. To reduce visual clutter, we omit ``transient'' arrows that are not part of the eventual steady-state patrol path the algorithms converge to.}
    \label{fig:cartesian-product.examples}
\end{figure}

It is natural to ask what the smallest amount of memory required to patrol $\mathcal{G}$ is with respect to some fixed sensing range, or what the smallest sensing range required to patrol $\mathcal{G}$ is with respect to some fixed memory size. We recount in \cref{theorem:trivialupperboundsonmemandvisrequiredtotraversegeneralsubgraphs}, (1) a folklore result that may be of interest, due to Fraigniaud et al. \cite{fraigniaud2005space,fraigniaud2005graph}. This result is not ours - but we reprove it here for completeness, and also because our different problem setting requires small changes to the argument. \cref{theorem:trivialupperboundsonmemandvisrequiredtotraversegeneralsubgraphs}, (2) is a simple corollary of our previous observations.

\begin{theorem}
Let $\mathcal{G}$ be a finite connected subgraph of $\mathbb{Z}^d$. Let $\mathrm{diam}(G)$ be the maximal distance between any pair of vertices $v,u \in \mathcal{G}$.

\begin{enumerate}
\item An algorithm exists that patrols $\mathcal{\mathcal{G}}$ using $\mathcal{O}(\mathrm{diam}(\mathcal{G}) \cdot \log d)$ bits of memory and $V = 1$ sensing range \cite{fraigniaud2005graph,fraigniaud2005space}.

\item An algorithm exists that patrols $\mathcal{G}$ using $0$ bits of memory and $V = \mathrm{diam}(\mathcal{G})$ sensing range if and only if $\mathcal{G}$ is Hamiltonian.
\end{enumerate}
\label{theorem:trivialupperboundsonmemandvisrequiredtotraversegeneralsubgraphs}
\end{theorem}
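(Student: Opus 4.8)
I would handle the two parts separately: Part (2) is essentially a corollary of the machinery already developed for the $0$-bit grid analysis, whereas Part (1) amounts to importing a known exploration algorithm into our sensing model. For Part (2), the first observation is that with sensing range $V=\mathrm{diam}(\mathcal{G})$ the robot senses all of $\mathcal{G}$: for any vertices $p, p'$ we have $\lVert p'-p\rVert_1 \le \mathrm{dist}_{\mathcal{G}}(p,p') \le \mathrm{diam}(\mathcal{G})$, so $\Sense{V}{p}$ records the relative position of every vertex of $\mathcal{G}$, i.e.\ it determines the translate $\mathcal{G}-p$. Since $\mathcal{G}$ is finite and nonempty it has no nonzero translational period, so $p\mapsto \mathcal{G}-p$ is injective on the vertices of $\mathcal{G}$ and $p$ is recoverable as the origin of $\mathcal{G}-p$; hence the robot always knows exactly which vertex it occupies. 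For the ``if'' direction: if $\mathcal{G}$ is Hamiltonian, the robot computes from $\Sense{V}{p}$ a canonical Hamiltonian cycle of $\mathcal{G}$ (applying a fixed tie-break to $\mathcal{G}-p$) and steps to the successor of $p$ along it; this uses $0$ bits and visits every vertex within $|\mathcal{G}|$ steps. For the ``only if'' direction: the proof of \cref{lemma:coveringispatrollingwhenV=0} used nothing specific to grids, so it carries over verbatim --- a $0$-bit algorithm determines the next vertex as a function of the current one, inducing a functional digraph of out-degree $1$ on $\mathcal{G}$, and patrolling forces that digraph to be strongly connected, hence to be a single spanning cycle; thus any $0$-bit patrolling algorithm (at any $V$) witnesses that $\mathcal{G}$ is Hamiltonian. (The one-vertex case is trivial.)

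For Part (1), I would adapt the BFS-tree traversal of Fraigniaud et al.\ \cite{fraigniaud2005graph,fraigniaud2005space}. Fix the start vertex $r$ and let $T$ be the BFS tree of $\mathcal{G}$ rooted at $r$ in which the parent of a vertex $v$ is chosen by a fixed tie-break among the neighbors of $v$ on shortest paths from $r$ to $v$; the depth of $T$ equals the eccentricity of $r$, which is at most $\mathrm{diam}(\mathcal{G})$. The robot performs a depth-first traversal of $T$, maintaining only the sequence of lattice directions along the current root-to-position path: a string of length $\le \mathrm{diam}(\mathcal{G})$ over an alphabet of $2d$ directions, i.e.\ $\mathcal{O}(\mathrm{diam}(\mathcal{G})\log d)$ bits, which is enough to backtrack and, on returning to a vertex, to resume with the next child in the fixed direction order. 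With $V=1$ the robot senses exactly which of the $2d$ lattice neighbors lie in $\mathcal{G}$, which is precisely the local port information the original algorithm needs; the only substantive change is the test of whether a neighbor $w$ in direction $\delta$ is a tree edge, i.e.\ whether $\mathrm{dist}_{\mathcal{G}}(r,w)=\mathrm{dist}_{\mathcal{G}}(r,v)+1$ with the correct tie-break, and the distance-certification subroutine of \cite{fraigniaud2005space} carries over with only small changes, since it too uses only the stored root-path together with $\mathcal{O}(1)$ auxiliary counters bounded by $\mathrm{diam}(\mathcal{G})$. Correctness --- every vertex is eventually visited --- and the memory bound then follow as in \cite{fraigniaud2005space}. (A naive alternative, remembering one's displacement from $r$, would cost $\mathcal{O}(d\log\mathrm{diam}(\mathcal{G}))$ bits, which is incomparable with the stated bound and, in any case, does not reproduce the cited result.)

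I expect the main obstacle to be the bookkeeping in Part (1): checking that the local tree-edge recognition and the backtracking discipline survive the passage to a robot that senses presence/absence of lattice neighbors rather than querying an explicitly port-labeled graph, and verifying that every auxiliary counter stays within the $\mathcal{O}(\mathrm{diam}(\mathcal{G})\log d)$ budget. Part (2), by contrast, requires no new ideas once the $V=\mathrm{diam}(\mathcal{G})$ full-localization observation and the grid-free reading of \cref{lemma:coveringispatrollingwhenV=0} are in place.
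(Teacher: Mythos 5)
Your Part (2) matches the paper's argument: the ``only if'' direction is exactly the paper's observation that the proof of \cref{lemma:coveringispatrollingwhenV=0} carries over verbatim to non-grid subgraphs of $\mathbb{Z}^d$, and the ``if'' direction is the same full-visibility observation (your remark that $p \mapsto \mathcal{G}-p$ is injective, so the robot can localize and follow a fixed canonical Hamiltonian cycle, is a nice explicit justification of a step the paper leaves implicit).

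Part (1) is where you diverge, and where your proposal has a real gap. The paper's proof (which is also the standard form of the Fraigniaud et al.\ bound) is a brute-force enumeration: every vertex of $\mathcal{G}$ is reachable from the start $\Position$ by \emph{some} direction sequence $(d_1,\ldots,d_{\mathrm{diam}(\mathcal{G})})$ over the $2d$ lattice directions, so the robot simply iterates over all such sequences in lexicographic order, walks each one (staying put on impossible moves), and backtracks by reversing the successful moves. The memory is $\mathrm{diam}(\mathcal{G})\cdot\log(2d)$ bits for the current sequence, $\mathrm{diam}(\mathcal{G})$ bits recording which moves succeeded, and $\mathcal{O}(\log \mathrm{diam}(\mathcal{G}))$ bits for a position counter --- all trivially within budget, and correctness is immediate. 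Your DFS-of-a-BFS-tree route instead requires the robot to \emph{recognize tree edges locally}, i.e.\ to decide whether $\mathrm{dist}_{\mathcal{G}}(r,w)=\mathrm{dist}_{\mathcal{G}}(r,v)+1$ with the correct tie-break, using only $V=1$ sensing and the stored root path. You flag this yourself and defer it to a ``distance-certification subroutine'' that ``carries over with only small changes,'' but this is precisely the hard part: with only presence/absence of lattice neighbors and $\mathcal{O}(\mathrm{diam}(\mathcal{G})\log d)$ bits, certifying shortest-path distances in an unknown $\mathcal{G}$ is not obviously doable, and you give no argument for it. The enumeration approach sidesteps this entirely; I would recommend replacing your Part (1) with it (or fully spelling out the certification subroutine and its memory accounting, which would make your proof considerably longer than the paper's).
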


\begin{proof}
We first prove (1). Let $\Position$ be $\Robot$'s initial position. Note that any vertex in $\mathcal{G}$ can be reached by following some sequence of directions $(d_1, d_2, \ldots d_{\mathrm{diam}(\mathcal{G})})$  where $d_i$ denotes one of the $2d$ possible directions $\Robot$ can move in. Hence, we can patrol $\mathcal{G}$ by iterating over all possible sequences of the form $(d_1, d_2, \ldots d_{\mathrm{diam}(\mathcal{G})})$, at each iteration moving according to the sequence and backtracking to $\Position$ once we are done (note that it is possible that such a sequence sometimes requires us to move in an impossible direction - in which case we simply stay still). Since there are $\mathrm{diam}(\mathcal{G}) \cdot \log 2d$ possible direction sequences, the total memory required by this algorithm is $O(\mathrm{diam}(\mathcal{G}) \cdot \log 2d)$ bits:

\begin{enumerate}[label=(\alph*)]
    \item $\mathrm{diam}(\mathcal{G}) \cdot \log 2d$ bits to iterate over all direction sequences in lexicographic order,
    \item $\mathrm{diam}(\mathcal{G})$ bits to keep track of which moves were \textit{successful} in the current direction sequence, so that we can return to $\Position$ by reversing all successful moves, and
    \item $log(2\mathrm{diam}(\mathcal{G}))$ bits to keep track of how far along the direction sequence, or the backtracking process, we are. 
\end{enumerate}

We now prove (2). In one direction, the proof of \cref{lemma:coveringispatrollingwhenV=0} directly extends to non-grid graphs, showing that any algorithm that patrols $\mathcal{G}$ with $0$ bits of memory implies the existence of a Hamiltonian cycle. In the other direction, $V = \mathrm{diam}(\mathcal{G})$ enables $\Robot$ to always see the entirety of $\mathcal{G}$, thus this sensing range is sufficient to run an algorithm that patrols $\mathcal{G}$ by traversing some Hamiltonian cycle of it.
\end{proof}

To our knowledge, \cref{theorem:trivialupperboundsonmemandvisrequiredtotraversegeneralsubgraphs}, (1) is the best previously known bound also for patrolling $d$-dimensional grid graphs. This is perhaps surprising, since we show that just $1$ bit is necessary.

%As mentioned, the $\mathcal{O}(\mathrm{diam}(\mathcal{G}) \cdot \log d)$ bound of \cref{theorem:trivialupperboundsonmemandvisrequiredtotraversegeneralsubgraphs} is very similar to Theorem 3 of \cite{fraigniaud2005space} wherein it is proven that $\mathcal{O}(\mathrm{diam}(\mathcal{G}) \cdot \log \Delta)$ bits are required to explore graphs with maximum degree $\Delta$ in a graph exploration model where a robot is allowed to drop a pebble that ``marks'' a node and then return to pick it up. \cref{theorem:trivialupperboundsonmemandvisrequiredtotraversegeneralsubgraphs} shows (using a different algorithm) that this bound also applies in our present, (non-pebble) graph patrolling problem.

\end{document}